\documentclass[accepted]{uai2021} %
\usepackage{natbib} %
    \bibliographystyle{plainnat}

\newcommand{\eg}{{\it e.g.}}
\newcommand{\ie}{{\it i.e.}}

\newcommand{\BA}{\begin{array}}
\newcommand{\EA}{\end{array}}

\newcommand{\BIT}{\begin{itemize}}
\newcommand{\EIT}{\end{itemize}}

\newcommand{\ones}{\mathbf 1}

\newcommand{\reals}{{\mathbb{R}}} %
\newcommand{\nats}{{\mathbb{N}}} %

\newcommand{\Expect}{\mathbb{E}}
\newcommand{\Prob}{\mathbb{P}}

\newcommand{\argmin}{\mathop{\rm argmin}}
\newcommand{\argmax}{\mathop{\rm argmax}}

\usepackage{url}  %
\usepackage{graphicx}  %
\usepackage{hyperref}
\usepackage{subcaption}

\usepackage{mathtools}
\usepackage{amsfonts}       %
\usepackage{amsthm}
\usepackage{algorithm}
\usepackage{algorithmic}

\newcommand{\Ac}{\mathcal{A}}

\newcommand{\Lc}{\mathcal{L}}
\newcommand{\Fc}{\mathcal{F}}

\newcommand{\Rc}{\mathcal{R}}
\newcommand{\Nc}{\mathcal{N}}
\newcommand{\Nat}{\mathbb{N}}
\newcommand{\lse}[1]{\log \sum_{j=1}^#1 \exp}
\newcommand{\xdim}{k}
\newcommand{\ydim}{m}

\newtheorem*{theorem*}{Theorem}
\newtheorem{theorem}{Theorem}
\newtheorem{lemma}{Lemma}

\newtheorem{assumption}{Assumption}

\title{Matrix games with bandit feedback}

\author[1]{Brendan O'Donoghue} %
\author[1]{Tor Lattimore}
\author[1]{Ian Osband}
\affil[1]{%
    DeepMind
}

\begin{document}
\maketitle

\begin{abstract}

We study a version of the classical zero-sum matrix game with unknown payoff
matrix and bandit feedback, where the players only observe each others actions
and a noisy payoff.  This generalizes the usual matrix game, where the payoff
matrix is known to the players.
Despite numerous applications, this problem has received relatively little
attention. Although adversarial bandit algorithms achieve low regret, they do
not exploit the matrix structure and perform poorly relative to the new
algorithms.
The main contributions are regret analyses of variants of UCB and K-learning
that hold for any opponent, \eg, even when the opponent adversarially plays the
best-response to the learner's mixed strategy.  Along the way, we show that
Thompson fails catastrophically in this setting and provide empirical
comparison to existing algorithms.

\end{abstract}

\section{Two-player zero-sum games}
\label{sec:problem}

Any two-player zero-sum game can be described by a payoff matrix $A \in
\reals^{m \times \xdim}$ \citep{neumann1928theorie, von2007theory}.
The row player selects $i \in \{1..\ydim\}$ and column player selects $j
\in \{1..\xdim\}$.
These choices are revealed simultaneously and the row player makes a payment of $A_{ij}$ to the column player.
In general, the optimal strategy for each player is \emph{mixed}, \ie\ determined by a probability distribution across actions.
We can therefore determine the optimal strategy for each player to maximize their reward:
\begin{eqnarray}
\label{eq:row}
\text{(row)} & \argmin_{y \in \Delta_\ydim} \max_i (A^T y)_i \\
\label{eq:column}
\text{(column)} & \argmax_{x \in \Delta_\xdim} \min_j (A x)_j,
\end{eqnarray}
where $\Delta_p$ is the probability simplex of dimension $p-1$.

The linear programs (LPs) \eqref{eq:row} and \eqref{eq:column} are dual, and
strong duality for LPs means that the optimal values for each problem are
identical \citep{boyd2004convex}.
We refer to this shared optimal quantity as the \emph{value} of the game, denoted $V_A^\star$.
\begin{equation}
\label{e-vonneumann}
  V_A^\star := \min_{y \in \Delta_\ydim} \max_{x \in \Delta_\xdim} y^T A x =
   \max_{x \in \Delta_\xdim} \min_{y \in \Delta_\ydim} y^T A x.
\end{equation}
Any primal-dual strategies $(x^\star, y^\star)$ that solve the saddle-point
problem \eqref{e-vonneumann} are a \textit{Nash equilibrium}
\citep{nash1950equilibrium}, though they may not be unique.  Playing a Nash
equilibrium $x^\star$ is \textit{minimax} optimal, you cannot improve on it for
all opponent strategies $y$.  Equation \eqref{e-vonneumann} also yields the
surprising result that there is no advantage to knowing your opponent's strategy
in advance if their strategy is optimal.

\subsection{Learning in repeated matrix games}
\label{sec:learning_games}

Matrix games have a myriad of real-world applications, including economics,
diplomacy, finance, optimization, auctions, and voting systems.  This paper
extends the analysis to the case where the players are also \textit{uncertain}
of the payoff matrix $A$, but can learn about it through their experience.  In each round
$t \in \Nat$ the row player chooses $i_t \in \{1..\ydim\}$ and the column player
chooses $j_t \in \{1..\xdim\}$.  The payment from row player to column player is
given by, \begin{equation} \label{eq:reward}
    r_t = A_{i_t j_t} + \eta_t
\end{equation}
where $\eta_t$ is zero-mean noise, independent and identically distributed from
a known distribution across time. Both players observe the actions of their
opponents and the resulting reward $r_t$, which is referred to as bandit
feedback \cite{lattimore2018bandit}.  We define $\Fc_t = (i_1, j_1, r_1, \ldots,
i_{t-1}, j_{t-1}, r_{t-1})$ to be the sequence of observations available to each
player prior to round $t$, and as shorthand we shall use the notation $\Expect^t
(\cdot) = \Expect(~\cdot \mid \Fc_t)$.  Two aspects of this problem distinguish
it from other setups considered in the literature \citep{cesa2006prediction,
blum2007learning, rakhlin2013optimization}.  Firstly, the players receive the
actions of their opponents as observations, and secondly, the players receive
noisy bandit feedback of the payoff.

We will perform our analysis from the perspective of a single player who does not control the actions of the opponent.
Without loss of generality, we assume control of the \textit{column} player and define a learning algorithm ${\rm alg}$ as a measurable mapping from histories $\Fc_t$ to a distribution over actions $x \in \Delta_\xdim$.
In order to assess the quality of an algorithm ${\rm alg}$ we consider the
\textit{regret}, or shortfall in cumulative rewards, relative to the Nash
equilibrium value,
\begin{equation}
  \label{eq:regret}
  \Rc(A, {\rm alg}, T) = \Expect_{\eta, {\rm alg}} \left[ \sum_{t=1}^T V_A^\star - r_t \right].
\end{equation}
This quantity \eqref{eq:regret} depends on the unknown matrix $A$, which is fixed at the start of play and kept the same throughout.
Expectations are taken with respect to the noise added in the payoffs $\eta$ and the learning algorithm ${\rm alg}$.
To assess the quality of learning algorithms designed to work across some \textit{family} of games $A \in \Ac$ we define:
\begin{eqnarray}
  \label{eq:bayes-regret}
  {\rm BayesRegret}(\phi, {\rm alg}, T) = \Expect_{A \sim \phi} \Rc(A, {\rm alg}, T), \\
  \label{eq:minimax-regret}
  {\rm WorstCaseRegret}(\Ac, {\rm alg}, T) = \max_{A \in \Ac} \Rc(A, {\rm alg}, T).
\end{eqnarray}
The two objectives are sometimes called Bayesian (average-case) \eqref{eq:bayes-regret} and frequentist (worst-case) \eqref{eq:minimax-regret}.
Here, $\phi$ is a prior probability measure over $A \in \Ac$ that assigns relative importance to each problem instance.

\subsection{Main results}
\label{sec:main-results}

The main contribution of this paper is to show that agents employing the
`optimism in the face of uncertainty' (OFU) principle enjoy strong bounds
$\tilde{O}(\sqrt{mkT})$ on both Bayesian and frequentist regret.  Perhaps
surprisingly, these bounds apply to clear and simple applications of K-learning
\citep{o2018variational} and Upper confidence bound (UCB) algorithms
\citep{auer2002finite} and without restriction on the opponent's strategy.
Additionally we show that the \textit{stochastically} optimistic algorithm
Thompson sampling cannot generally enjoy sublinear regret in the presence of an
informed opponent \citep{russo2018tutorial}.  This result clarifies an important
distinction between the applications of the OFU-principle that separates
multi-player games from the single-player setting.
Although we present bounds for the bandit feedback case, it is straightforward
to generalize the results to the case where the agent receives full information,
or information about all the entries in the column and/or row selected.

We supplement our analytical results with a series of didactic
experiments designed to unpick the \textit{empirical} scaling of these
algorithms, and highlight the regimes where one approach may outperform the
other.  In short, we find that for random matrix games, optimistic approaches
that leverage knowledge of the matrix structure perform better than the
adversarial Exp3 algorithm. This computational work is far from
definitive, but may help to guide future work in this nascent area of research.

\section{Applications}
\label{sec:applications}
\paragraph{Uncertain games.}
Any two-player zero-sum game where the agent has uncertainty over the outcomes
of the actions and receives partial feedback is amenable to our framework. Such
examples exists in economics, sociology, politics, psychology and others
\citep{myerson2013game}. Stochastic multi-armed bandits are regularly used in
advertising, but if fraudulent clicks from bots are present then this can be
modeled as a game between the agent and the fraudsters \citep{wilbur2009click}.
Another example is intrusion detection wherein an attacker attempts to penetrate
a system while a defender attempts to prevent the attack, and initially the
players do not know the probability of detection for each pair of actions
\citep{bace2000intrusion}.  Similarly two political parties competing in a
series of election can be modeled in this fashion, where the actions correspond
to targeting messages at different groups of voters and the parties start with
uncertainty about how each action will help or hurt their chances of winning an
election \citep{ordeshook1986game}.

\paragraph{Robust bandits.}
In the robust multi-armed bandit problem the reward of each arm is determined
partially by some other outcome which is selected by `nature'
\citep{caro2013robust, kim2016robust}. The outcomes selected by nature are not
necessarily independent across time-periods nor can we assume that the process
selecting the actions is stationary. It is because of these issues that standard
stochastic multi-armed bandit algorithms fail on this problem. To combat this,
the agent may desire a policy that is \emph{robust}, in the minimax sense, to
all possible selections by nature, which is naturally formulated as a game.
Examples of this problem include clinical trials where one or more
characteristics of the patients are not observed until after the treatment has
been administered \citep{villar2015multi}.  Another is resource placement, where
an agent must place a resource, \eg, a server, in a location and respond to
requests as they come in.  The agent wants to minimize the worst-case response
latency, but does not know in advance the average latency between all pairs of
nodes \citep{cvxccv}.  A further example is route planning, wherein an agent
must decide which route to take to reach some goal but does not know in advance
the average times required to traverse each leg and some exogenous variable
influences the travel times, such as road conditions or traffic
\citep{oliveira2017applying}.  Similar problems exist in A/B testing,
advertising, recommender systems, scheduling, and queueing.

\paragraph{Bandits with budget constraints.}
Consider a multi-armed bandit problem where pulling an arm consumes some amount
$c_i$ of each of $i=1,\ldots,\ydim$ available resources.  Each resource has a
total amount available and the total amount consumed before $T$ time-periods
must be less than this total \citep{badanidiyuru2013bandits}.  This situation is
common in practice and arises, for example, in clinical trials when the
inputs to each of the treatments is not identical and each input has a limited
amount available, or in online advertising where the campaigns have total spend
limits. It turns out this problem can be embedded into a repeated zero-sum
two-player matrix game \cite[\S 4]{immorlica2019adversarial}. In this case the
average reward of each action and the average amount of resource consumed by
each action may be initially unknown.

\section{Optimistic exploration in repeated games}
\label{sec:ofu}

In the literature on efficient exploration, the principle of `optimism in the face of uncertainty' (OFU) has driven the majority of studied algorithms.
This approach assigns a bonus to poorly-understood actions to account for the value of exploration.
The remainder of this section outlines several approaches to exploration driven by OFU, and examines the conditions in which each might be effective.
For the most part, our results mirror those of the bandit literature but, in
some cases, the presence of an opponent raise interesting challenges.

\subsection{Upper confidence bound}
\label{sec:ucb}

Upper confidence bound (UCB) algorithms construct high-probability upper bounds on the value of each
possible action, then (generally) act greedily with respect to those bounds \citep{lai1987adaptive,
auer2002finite}. Carefully controlling how the bounds change over time yield
algorithms that achieve low regret \citep{bubeck2012regret, lattimore2018bandit}.
This is a form of \emph{deterministic} optimism, and it will turn out that
in matrix games this determinism is crucial to prevent exploitation by
the opponent. Before we develop the algorithm, we require the following assumption~\ref{ass-ucb}.
\begin{assumption}
\label{ass-ucb}
The noise process $\eta_t$, $t\in\nats$ is $1$-sub-Gaussian and the
payoff matrix satisfies $A \in [0,1]^{\ydim \times \xdim}$.
\end{assumption}
Under this assumption we can use the Chernoff inequality to provide an
upper bound on each $A_{ij}$ for all $t$ that holds with probability at least
$1-\delta$:
\begin{equation}
\label{e-ucb-ub}
A_{ij} \leq \bar A^t_{ij} + \sqrt{2\log(1/\delta) / (1 \vee n^t_{ij})},
\end{equation}
where $\bar A^t_{ij}$ is the empirical mean of the samples from $A_{ij}$,
$n^t_{ij}$ is the number of times that row $i$ and column $j$ has been chosen by
the players up to (but not including) round $t$, and we have used the notation
$(1 \vee \cdot) = \max(1, \cdot)$. Since we do not control the opponent
we cannot try every possible action once, so
we define the empirical mean $\bar A_{ij}$ to be zero whenever $n^t_{ij} = 0$
and we shall choose $\delta$ such that $\sqrt{2\log(1/\delta)} \geq 1$, which
provides an upper bound on $A_{ij}$ whenever $n^t_{ij} =0$ by assumption that $A
\in [0,1]^{\ydim \times \xdim}$.  This motivates the UCB algorithm presented in
algorithm~\ref{alg-ucb}. The following theorem yields a worst-case regret bound.

\begin{algorithm}[t]
\caption{UCB for matrix games}
\begin{algorithmic}
\FOR{round $t=1,2,\ldots,T$}
\STATE compute $\tilde A^t_{ij} = \bar A^t_{ij} + \sqrt{2\log(2T^2 \ydim \xdim) / (1 \vee n^t_{ij}})$
\STATE use policy $\displaystyle x \in \argmax_{x \in \Delta_\xdim}\min_{y \in \Delta_\ydim} y^T \tilde A^t x$.
\ENDFOR
\end{algorithmic}
\label{alg-ucb}
\end{algorithm}

\begin{theorem}
Let assumption~\ref{ass-ucb} hold with $T \geq \ydim \xdim \geq 2$ and
$\delta = 1/(2T^2 \ydim \xdim)$.
Then, the regret of Algorithm~\ref{alg-ucb} is bounded
\begin{align*}
&{\rm WorstCaseRegret}(\Ac, {\rm UCB}, T)\\
&\qquad\leq 1 + 2 \sqrt{\ydim \xdim T \log \left(2 \ydim \xdim T^2\right)}\\
&\qquad=\tilde O(\sqrt{\ydim \xdim T}).
\end{align*}
\end{theorem}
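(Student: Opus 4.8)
The plan is to run the usual optimism-based regret decomposition, but carried out at the level of the \emph{game value} rather than at the level of individual arms, exploiting that the column player commits to the maximin strategy of the optimistic matrix before the opponent responds. First I would set up the confidence event. Writing $b^t_{ij} = \sqrt{2\log(2T^2\ydim\xdim)/(1\vee n^t_{ij})}$, so that $\tilde A^t_{ij} = \bar A^t_{ij} + b^t_{ij}$, I would apply the sub-Gaussian bound \eqref{e-ucb-ub} together with a union bound over the $\ydim\xdim$ entries and the possible counts $n\in\{1,\dots,T\}$. With $\delta = 1/(2T^2\ydim\xdim)$ this produces a good event $G$ with $\Prob(G^c)\le 2\ydim\xdim T\delta = 1/T$ on which the two-sided estimate $|\bar A^t_{ij}-A_{ij}|\le b^t_{ij}$ holds for every $i,j,t$; the $n^t_{ij}=0$ entries are covered by $\sqrt{2\log(1/\delta)}\ge 1$ and $A\in[0,1]^{\ydim\times\xdim}$, as in the discussion preceding the algorithm. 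On $G$ this simultaneously yields entrywise optimism $A_{ij}\le\tilde A^t_{ij}$ and the width bound $\tilde A^t_{ij}-A_{ij}\le 2b^t_{ij}$.

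Next I would promote optimism to the value level. Since $A\le\tilde A^t$ entrywise and all strategies are nonnegative, $\min_y y^T\tilde A^t x\ge\min_y y^TAx$ for each fixed $x$, and maximizing over $x$ gives $\max_x\min_y y^T\tilde A^t x\ge V_A^\star$ on $G$. Because Algorithm~\ref{alg-ucb} plays the maximin strategy $x^t$ of $\tilde A^t$, this reads $\min_y y^T\tilde A^t x^t\ge V_A^\star$, so for \emph{every} opponent response $y^t$ --- including an adversarial best response computed after observing $x^t$ --- we have $(y^t)^T\tilde A^t x^t\ge V_A^\star$. I expect this to be the main obstacle and the conceptual heart of the proof: it is the \emph{deterministic} commitment to a maximin strategy that keeps the optimistic value robust against an informed opponent, which is exactly what a stochastically optimistic method such as Thompson sampling fails to provide.

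With value-level optimism in hand the per-round control is immediate. Since $i_t\sim y^t$ and $j_t\sim x^t$ are conditionally independent given $\Fc_t$ and $b^t$ is $\Fc_t$-measurable, the expected instantaneous regret satisfies
\begin{align*}
V_A^\star-\Expect^t[A_{i_tj_t}] &= V_A^\star-(y^t)^TAx^t \le (y^t)^T(\tilde A^t-A)x^t\\
&\le 2(y^t)^Tb^tx^t = 2\,\Expect^t[b^t_{i_tj_t}],
\end{align*}
using value-level optimism for the first inequality and the width bound on $G$ for the second.

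Finally I would sum over $t$ and dispose of the failure event. Splitting $\Rc$ across $G$ and $G^c$, the contribution on $G^c$ is at most $T\,\Prob(G^c)\le 1$ since each instantaneous regret lies in $[-1,1]$, giving the additive constant. On $G$ the bonuses are nonnegative, so I can drop the indicator and bound the remainder by $2\,\Expect\sum_{t=1}^T b^t_{i_tj_t}$. What is left is a counting argument valid for every realization: grouping rounds by the played pair $(i,j)$ and telescoping $\sum_{s=0}^{N_{ij}-1}(1\vee s)^{-1/2}\le 2\sqrt{N_{ij}}$, then applying Cauchy--Schwarz with $\sum_{ij}N_{ij}=T$ over the $\ydim\xdim$ entries, gives $\sum_t(1\vee n^t_{i_tj_t})^{-1/2}\le 2\sqrt{\ydim\xdim T}$. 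Reinstating the factor $\sqrt{2\log(2T^2\ydim\xdim)}$ from each bonus then produces the stated $1+\tilde O(\sqrt{\ydim\xdim T})$ bound, with the $\sqrt{\ydim\xdim}$ scaling reflecting that bandit feedback only informs the single sampled entry each round.
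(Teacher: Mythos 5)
Your proposal is correct and follows essentially the same route as the paper's proof: an optimism event controlled by a union bound, entrywise domination promoted to value-level optimism $V^\star_{\tilde A^t}\ge V^\star_A$ exploited through the maximin commitment, a per-round bound via the best response to $x^t$, and a pigeonhole-plus-Cauchy--Schwarz count of the bonuses. The one substantive divergence is the width step: you use a two-sided confidence interval to get $\tilde A^t_{ij}-A_{ij}\le 2b^t_{ij}$, whereas the paper writes $\Expect^t\bigl[y_t^\top(\tilde A^t-A)x_t\bigr]$ as exactly $\Expect^t\, b^t_{i_tj_t}$, which silently discards the estimation error $\bar A^t-A$; your version is the more rigorous of the two, but it yields a final constant of roughly $4\sqrt{2}$ in front of $\sqrt{\ydim\xdim T\log(2\ydim\xdim T^2)}$ rather than the $2$ asserted in the theorem, so you prove the claimed $\tilde O(\sqrt{\ydim\xdim T})$ rate but not the literal displayed constant. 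Everything else, including the treatment of the failure event and the unvisited entries via $\sqrt{2\log(1/\delta)}\ge 1$, matches the paper.
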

\begin{proof}
Let $E_t$ be the event that there exists a pair $i, j$ such that
$(\tilde A_t)_{ij} < A_{ij}$.
By definition, $E_t \in \Fc_t$. Consider for a moment that $E_t$ does not hold and let
\begin{align*}
\tilde y_t = \argmin_{y \in \Delta_\ydim} y^\top \tilde A_t x_t
\end{align*}
be the best-response to the player's in round $t$.
Since $E_t$ does not hold, the upper confidence matrix over-estimates the true matrix and hence
$V^\star_{\tilde A_t} \geq V^\star$. Then the per-round regret satisfies
\begin{align*}
V^\star_A - \Expect^t[y_t^\top A x_t] &\leq \Expect^t\left[V^\star_{\tilde A_t} - y_t^\top A x_t\right] \\
&= \Expect^t\left[\tilde y_t^\top \tilde A_t x_t - y_t^\top A x_t\right] \\
&\leq \Expect^t\left[y_t^\top (\tilde A_t - A) x_t\right] \\
&= \Expect^t\sqrt{\frac{2}{1\vee n^t_{i_t j_t}} \log\left(\frac{1}{\delta}\right)}\,,
\end{align*}
where the first inequality follows from optimism and the second since $\tilde y_t$ is the best-response to $x_t$ for matrix $\tilde A_t$.
Next, by the definition of the regret,
\begin{align*}
\Rc(T)
&= \Expect\left[\sum_{t=1}^T V^\star_A -  \Expect^t\left[y_t^\top A x_t\right]\right] \\
&\leq \underbrace{\Expect\left[\sum_{t=1}^T \sqrt{\frac{2 }{1 \vee n^t_{i_t j_t}} \log\left(\frac{1}{\delta}\right)}\right]}_{\textrm{(A)}} + \underbrace{T \Prob\left(\cup_{t=1}^T E_t\right)}_{\textrm{(B)}}\,.
\end{align*}
The second term is bounded naively by $\textrm{(B)} \leq 2T^2\ydim\xdim\delta \leq 1$.
The first term is bounded by
\begin{align*}
\textrm{(A)}
&\leq \sum_{i,j}\Expect\sum_{t=1 : i_t = i, j_t = j}^T \sqrt{\frac{2}{1\vee n^t_{ij}} \log\left(\frac{1}{\delta}\right)}\\
&\leq \sum_{i,j}\Expect\sqrt{4 n^T_{ij} \log\left(\frac{1}{\delta}\right)} \\
&\leq \sqrt{4 \ydim\xdim T \log\left(\frac{1}{\delta}\right)}\,,
\end{align*}
where the final inequality follows from Cauchy--Schwarz. Note that
the inner sum on the first line is from $t=1$ to $T$ where $i_t=i$ and $j_t =
j$, \ie, summing up all the times where action $i,j$ was selected, and the outer
sum is over indices $i,j$.
\end{proof}

\subsection{Thompson sampling}
\label{sec:thompson}

Thompson sampling (TS) is a well-known Bayesian exploration strategy that at
each time period samples an environment according to the posterior probability
over possible environments, then acts greedily with respect to that sample
\citep{thompson1933likelihood,russo2018tutorial,o2017uncertainty}. For matrix
games the Thompson sampling algorithm is described in
algorithm~\ref{alg-ts}.  The performance of UCB algorithms depend strongly on
the confidence sets used to select the action. By contrast it can be shown in
single-player settings that \emph{any} sequence of confidence sets can be used
to bound the Bayesian regret of Thompson sampling \citep{russo2014learning}.  In
this way TS benefits from the best choice of confidence bounds, without
explicitly having to know the best sequence of bounds in advance.  With this in
mind, one might expect a Bayesian regret bound for TS of a similar order to the
bound we just derived for UCB.  In this section we show that, in contrast to
UCB,  we can construct games and opponents that force Thompson sampling to
suffer linear regret.
\begin{algorithm}[t]
\caption{Thompson sampling for matrix games}
\begin{algorithmic}
\FOR{round $t=1,2,\ldots,$}
\STATE sample $\tilde A^t \sim \phi \mid \Fc_t$
\STATE use policy $\displaystyle x \in \argmax_{x \in \Delta_\xdim}\min_{y \in \Delta_\ydim} y^T \tilde A^t x$,
\ENDFOR
\end{algorithmic}
\label{alg-ts}
\end{algorithm}

Take the following $2 \times 2$ game
\begin{equation}
\label{e-2x2game}
  \begin{array}{lr}
\begin{bmatrix}
r & 0 \\ 0 & -1
\end{bmatrix},
    &
r = \left\{\begin{array}{ll}
1 & \mathrm{w.p.}\ 1/2\\
-1 & \mathrm{w.p.}\ 1/2.
\end{array}
\right.
  \end{array}
\end{equation}
Consider the case where the true value of $r=1$, and the TS agent is competing
against an agent that knows the value of $r$ and is simply playing the Nash
equilibrium of $(0, 1)$. The TS agent using algorithm~\ref{alg-ts} will sample
its actions from policy $x = (1, 0)$ with probability $1/2$ and policy $x =
(1/2, 1/2)$ with probability $1/2$.  However, since the other agent is playing
the Nash, the uncertainty about the value of $r$ will never be resolved, and so
the TS agent will have the same behaviour forever. Every time it selects the
second column it incurs a regret of $1$, which happens with probability $1/4$
every time period, thereby yielding linear regret.  This counter-example shows
that Thompson sampling cannot enjoy sub-linear regret against all opponents,
however it does not rule out such bounds in more benign cases, such as self-play
with identical information.

The crucial distinction between Thompson sampling and UCB is the use of
\emph{stochastic}, rather than deterministic, optimism.  This
stochasticity means that sometimes the TS agent is actually pessimistic about
the true state of the world, and in those rounds the agent can be exploited by
an informed opponent.  In the single-player case it can be shown that Thompson
sampling can only suffer high regret in any given round if it is also gaining
information about the optimal action \citep{russo2016information}.
However, in the case with an opponent it is clear that Thompson sampling can
suffer high regret \emph{without} gaining new information. It is in these cases
that TS suffers linear regret, which we shall confirm empirically in the
numerical experiments.

\subsection{Optimistic posterior estimates via K-learning}
\label{sec:k-learn}

K-learning is a Bayesian exploration algorithm originally developed for Markov
Decisions processes in which the agent computes the value of states and actions
using a risk-seeking exponential utility function \citep{o2018variational}. Since the
resulting `K-values' (Knowledge values) are optimistic for the expected values
under the posterior, K-learning can be viewed as employing the OFU principle.
However, it also can be interpreted as a variational approximation to Thompson
sampling \citep{o2020making} which incorporates \emph{deterministic} optimism
while maintaining many of the benefits of Thompson sampling over UCB style
approaches \citep{osb2017optimistic, kaufmann2012bayesian}. Like UCB, the
deterministic optimism is central in the development of a regret bound.
First, let $a_j$ denote the $j$th column of
$A$, which is a random variable with conditional cumulant generating function
$K^t_{a_j}: \reals^m \rightarrow \reals$, defined as
\begin{equation}
\label{e-cgf}
K^t_{a_j}(y) =  \log \Expect^t \exp(a_j^\top y),
\end{equation}
and note that this is the cumulant generating function of $a_j$ under the
\emph{posterior}, conditioned on all the history of observations so far in
$\Fc_t$.
With this in place we present K-learning as algorithm~\ref{alg-klearn}.
The optimization problem in algorithm~\ref{alg-klearn} is convex
and can be expressed as an exponential cone program, for
which efficient algorithms exist \citep{ocpb:16, serrano2015algorithms, ecos}.
We have the following Bayesian regret bound for K-learning.
\begin{algorithm}[t]
\caption{K-learning for matrix games}
\begin{algorithmic}
\FOR{round $t=1,2,\ldots,$}
\STATE $\displaystyle (y_t^\star, \tau_t^\star) \in \argmin_{y \in \Delta_\ydim, \tau \geq 0} \tau \lse{\xdim} K^t_{a_j}(y/\tau)$
\STATE use policy
$x_t^\star \propto \exp K^t_{a_j}(y_t^\star / \tau_t^\star)$
\ENDFOR
\end{algorithmic}
\label{alg-klearn}
\end{algorithm}

\begin{theorem}
Under assumption~\ref{ass-ucb} the K-learning algorithm~\ref{alg-klearn}
satisfies the following Bayesian regret bound
\begin{align*}
  {\rm BayesRegret}(\phi, {\rm Klearn}, T) &\leq 2 \sqrt{\ydim \xdim T \log
\xdim (1+ \log T)}\\
&= \tilde O(\sqrt{\ydim \xdim T}).
\end{align*}
\end{theorem}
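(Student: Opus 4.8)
The plan is to mirror the optimism argument of the UCB proof, but with the exponential-utility certainty equivalent $u_\tau(Z) := \tau \log \Expect^t \exp(Z/\tau)$ in place of the confidence bound, leaving $\tau$ as a free parameter to tune at the end. First I would rewrite the K-learning objective through the Gibbs variational identity $\log \sum_j e^{w_j} = \max_{x \in \Delta_\xdim}\left(\sum_j x_j w_j + H(x)\right)$, where $H(x) = -\sum_j x_j \log x_j$ and the maximiser is exactly $x_j \propto e^{w_j}$. Taking $w_j = K^t_{a_j}(y/\tau)$ and using $\tau K^t_{a_j}(y/\tau) = u_\tau(a_j^\top y)$, this exhibits the algorithm's policy $x_t$ as the inner maximiser and gives $g_t := \min_{y,\tau} \tau \log \sum_j \exp K^t_{a_j}(y/\tau) = \min_{y \in \Delta_\ydim,\, \tau \geq 0} \max_{x \in \Delta_\xdim}\left[\sum_j x_j u_\tau(a_j^\top y) + \tau H(x)\right]$. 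Since $u_\tau(a_j^\top y)$ is jointly convex in $(y,\tau)$ (a perspective of the cumulant generating function) and the objective is concave in $x$, a minimax theorem gives a saddle point $(y^\star_t, \tau^\star_t, x_t)$.

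The two ingredients are then optimism and a per-round bound. For optimism, because $\exp$ is convex and expectation commutes with the finite sum, $g_t(y,\tau) = \tau \log \Expect^t \sum_j \exp(a_j^\top y/\tau) \geq \Expect^t[\max_j a_j^\top y] \geq \Expect^t[\min_{y'} \max_j a_j^\top y'] = \Expect^t[V_A^\star]$, so minimising over $(y,\tau)$ yields $\Expect^t[V_A^\star] \leq g_t$ for any opponent. For the per-round bound I would use the saddle property: at $(y^\star_t, x_t)$ the vector $y^\star_t$ minimises $\sum_j x_{t,j} u_{\tau^\star_t}(a_j^\top y)$, hence $g_t \leq \sum_j x_{t,j} u_{\tau^\star_t}(a_j^\top y) + \tau^\star_t H(x_t)$ for every $y$. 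Convexity of $u_\tau$ lets me pass to the optimistic matrix $\tilde A^t_{ij} := u_{\tau^\star_t}(A_{ij})$ via $u_{\tau^\star_t}(a_j^\top y) \leq \sum_i y_i \tilde A^t_{ij}$, producing the UCB-style relation $g_t \leq \min_i (\tilde A^t x_t)_i + \tau^\star_t \log \xdim$. Since any opponent's expected payoff is at least $\Expect^t[\min_i (A x_t)_i]$, it suffices to analyse the worst case, a best-responding opponent (for a weaker opponent the regret is only smaller), giving per-round regret at most $\tau^\star_t \log \xdim + \big(\min_i (\tilde A^t x_t)_i - \Expect^t[\min_i (A x_t)_i]\big)$.

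It remains to control the bracket and sum it. Writing $\hat i = \argmin_i (A x_t)_i$ and using $\min_i(\tilde A^t x_t)_i \leq \Expect^t[(\tilde A^t x_t)_{\hat i}]$, the bracket is at most $\Expect^t[\sum_j x_{t,j}(u_{\tau^\star_t}(A_{\hat i j}) - A_{\hat i j})]$; because the best-responding row $\hat i$ is precisely the sampled row $i_t$, this is an expectation over the actually-played pair $(i_t,j_t)$. Splitting $u_{\tau^\star_t}(A_{i_t j_t}) - A_{i_t j_t}$ into the optimism gap $u_{\tau^\star_t}(A_{i_t j_t}) - \bar A^t_{i_t j_t}$ and the fluctuation $\bar A^t_{i_t j_t} - A_{i_t j_t}$, the sub-Gaussian CGF estimate bounds the first by $\rho^2_{i_t j_t}/(2\tau^\star_t)$, where Assumption~\ref{ass-ucb} and the $n^t_{ij}$ observations control the posterior variance proxy by $\rho^2_{ij} = O(1/(1 \vee n^t_{ij}))$; summing with the pigeonhole identity $\sum_t 1/(1 \vee n^t_{i_t j_t}) \leq \ydim \xdim(1 + \log T)$ gives the information term. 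The fluctuation sums to $\Expect[\sum_t (\bar A^t_{i_t j_t} - A_{i_t j_t})]$, a cumulative posterior prediction error that Cauchy--Schwarz controls by $\sum_{ij} \sqrt{n^T_{ij}} \leq \sqrt{\ydim \xdim T}$, staying at the target rate. Finally, since $g_t$ is a minimum over $\tau$, I balance $\tau T \log \xdim + \tau^{-1} \ydim \xdim(1 + \log T)$ at $\tau \approx \sqrt{\ydim \xdim(1+\log T)/(T \log \xdim)}$, and tracking constants yields the stated $2\sqrt{\ydim \xdim T \log \xdim (1 + \log T)}$. The hard part is exactly this bracket against an informed opponent: unlike UCB's high-probability confidence bound, the certainty equivalent over-estimates only the posterior mean and not each realisation, so the genuine ``expected-min versus min-expected'' selection term survives and must be charged to the actually-sampled pairs (which is what makes the worst-case best-responding reduction, and hence the pigeonhole sum, legitimate) rather than bounded uniformly over rows.
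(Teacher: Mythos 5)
Your overall template matches the paper's: the Gibbs variational identity turning the K-learning objective into a Lagrangian saddle point, optimism of the log-sum-exp of CGFs over $\Expect^t V_A^\star$, a sub-Gaussian bound on the CGF with variance proxy $O(1/(1\vee n^t_{ij}))$, the pigeonhole lemma, and the final tuning of $\tau$. However, there is a genuine gap in how you connect the per-round bound to the counts. After minimizing over $y$ on both sides you arrive at the bracket $\min_i(\tilde A^t x_t)_i - \Expect^t[\min_i(Ax_t)_i]$ and charge it to the best-response row $\hat i = \argmin_i(Ax_t)_i$. Your claim that ``$\hat i$ is precisely the sampled row $i_t$'' holds only when the opponent literally is the myopic best-responder. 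The theorem must hold for an arbitrary opponent, yet $n^t_{\hat i j}$ counts actual plays of the pair $(\hat i, j)$, which may remain zero forever (e.g., an opponent who never plays row $\hat i$), so $\sum_{t=1}^T 1/(1\vee n^t_{\hat i j_t})$ can be $\Omega(T)$ and the pigeonhole step collapses. The per-round domination ``a weaker opponent gives smaller regret'' is true round by round, but it does not license substituting the best-responder into the count-based sum, because the counts are generated by the actual play. The paper avoids this entirely: the saddle-point property $\Lc^t(x^\star_t,y^\star_t,\tau^\star_t)\le\Lc^t(x^\star_t,y,\tau)$ is instantiated at the opponent's \emph{actual} strategy $y_t$, so the deficit term becomes $\sum_{i,j}x^\star_{tj}y_{ti}^2/(2\tau_t(1\vee n^t_{ij}))$, and since $x^\star_{tj}y_{ti}$ is exactly the probability that the pair $(i,j)$ is played in round $t$, Lemma~\ref{l-pigeonhole} applies directly for any opponent.

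A secondary issue: your split of $u_{\tau}(A_{ij})-A_{ij}$ into an optimism gap relative to the empirical mean $\bar A^t_{ij}$ plus a fluctuation $\bar A^t_{ij}-A_{ij}$ is an artifact. The sub-Gaussian bound on the \emph{posterior} CGF is relative to the posterior mean $\Expect^t A_{ij}$, which cancels exactly against $\Expect^t[r_t]$ in the Bayesian regret; no fluctuation term survives, and introducing one both requires a concentration argument you have not actually supplied and would cost an additional $O(\sqrt{\ydim\xdim T})$ that the claimed constant $2$ cannot absorb. Your H\"older-type passage from $K^t_{a_j}(y/\tau)$ to the entrywise optimistic matrix is valid and plays the same role as the paper's $y_{ti}^2\le y_{ti}$ step, but the repair for the main gap is to keep the opponent's $y_t$ in the bound rather than minimizing it out.
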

\begin{proof}
Using the tower property of expectation we can bound the Bayes regret as
\begin{align}
\begin{split}
\label{e-br1}
&{\rm BayesRegret}(\phi, {\rm Klearn}, T)\\
&\quad= \Expect \sum_{t=1}^T \Expect^t( V_A^\star  - r_t)\\
&\quad= \Expect \sum_{t=1}^T\Expect^t(\min_{y \in \Delta_\ydim} \max_{x \in \Delta_\xdim} y^T A x) - \Expect^t(r_t)\\
  &\quad\leq \Expect \sum_{t=1}^T \min_{y \in \Delta_\ydim}  \Expect^t \max_{x \in \Delta_\xdim} y^T A x
  - y_t^T (\Expect^t A) x_t,
\end{split}
\end{align}
via Jensen's inequality, and the fact that the policies $x_t$ and $y_t$ are
adapted to the filtration $(\sigma(\Fc_t), t\in \nats)$.  Now we shall develop
an upper bound for the expected value of the max.
For any $\tau > 0$,
\begin{align*}
  \Expect^t \max_{x \in \Delta_\xdim} y^T A x &= \Expect^t \max_j a_j^\top y\\
  &\leq \tau \log \Expect^t \exp \max_j a_j^\top y /\tau\\
  &= \tau \log \Expect^t \max_j \exp a_j^\top y /\tau\\
  &\leq \tau \lse{\xdim} K^t_{a_j}(y/\tau),
\end{align*}
where we used Jensen's inequality and the fact that the sum of positive numbers
is greater than the max, and $K^t_{a_j}$ is the cumulant generating function
\eqref{e-cgf}.
We denote by $\Lc^t :\reals^\xdim \times \reals^\ydim \times \reals_+ \rightarrow \reals$, $t=1, \ldots, T$, the Lagrangian
\begin{equation}
\label{e-kl-lagrangian}
  \Lc^t(x, y, \tau) =  \sum_{j=1}^\xdim x_j \tau K^t_{a_j}(y/\tau) + \tau H(x),
\end{equation}
where $H(x) = -\sum_{j=1}^\xdim x_j \log(x_j)$ is the entropy of the agent
policy, and it is straightforward to show that
\[
\tau \lse{\xdim} K^t_{a_j}(y /\tau) = \max_{x \in \Delta_\xdim}\Lc^t(x, y, \tau).
\]
and the $x$ that achieves the maximum is given by
\[
x^\star \propto \exp K^t_{a_j}(y / \tau).
\]

We can bound the first term in the last line of \eqref{e-br1} using
\begin{align*}
   \min_{y \in \Delta_\ydim} \Expect^t \max_{x \in \Delta_\xdim} y^T A x &\leq \min_{y \in \Delta_\ydim,\tau\geq0}\max_{x \in \Delta_\xdim} \Lc^t(x, y, \tau)\\
    &= \Lc^t(x_t^\star, y_t^\star, \tau_t^\star).
\end{align*}
For fixed $x \in \reals^\xdim$ the Lagrangian is jointly convex in $y \in
\reals^\ydim$ and $\tau > 0$, since cumulant generating functions are always
convex and $\tau K^t_{a_j}(y / \tau)$ is the perspective of $K^t_{a_j}$, which
preserves convexity. On the other hand, for fixed $y$ and $\tau \geq 0$ the
Lagrangian is concave in $x$, since entropy is concave \citep{o2016pgq}.
Therefore the Lagrangian is convex-concave jointly in $(y, \tau), x$, which implies
that $\Lc^t(x_t^\star, y_t^\star, \tau_t^\star) \leq \Lc^t(x_t^\star, y, \tau)$
for any feasible $y \in \Delta_\ydim, \tau \geq 0$, due to the saddle point
property.

From this we can bound the Bayes regret incurred in round $t$ from \eqref{e-br1}
\begin{align}
\label{e-kl-t}
\begin{split}
  \Expect^t (V_A^\star - r_t) &\leq \Lc^t(x_t^\star, y_t^\star, \tau_t^\star) - y_t^T
  \Expect^t(A) x_t\\
  &\leq \Lc^t(x_t^\star, y_t, \tau_t) - y_t^T\Expect^t(A) x_t,
\end{split}
\end{align}
where $y_t$ is the strategy played by the opponent, and $\tau_t \geq 0$ is a
free parameter. Assumption~\ref{ass-ucb} implies that the posterior of $a_j$,
$j=1, \ldots, \xdim$, is $1$-sub-Gaussian and concentrates as
\begin{equation}
\label{e-asssubg}
  \tau K^t_{a_j}(y / \tau) \leq (\Expect^t a_j)^T y + \sum_{i=1}^m \frac{y_i^2}{2 \tau
  (1 \vee n^t_{ij})}.
\end{equation}
Now all that remains is to bound the sum over time
using equation (\ref{e-kl-t}) and equation~\eqref{e-asssubg}
\begin{align*}
&{\rm BayesRegret}(\phi, {\rm Klearn}, T) \\
&\quad\leq \Expect \sum_{t=1}^T\left[\sum_{i,j}
  \frac{x^\star_{tj} y_{ti}^2}{2 \tau_t
  (1 \vee n^t_{ij})} + \tau_t H(x_t^\star) \right]\\
&\quad\leq\Expect \sum_{t=1}^T \left[\sum_{i,j} \frac{x^\star_{tj} y_{ti}}{2 \tau_t
  (1 \vee n^t_{ij})} + \tau_t H(x_t^\star) \right]\\
&\quad\leq  \ydim \xdim (1 + \log T) /2 \tau_T + \log \xdim \sum_{t=1}^T \tau_t \\
&\quad\leq 2 \sqrt{\ydim \xdim T \log \xdim (1+ \log T)}.
\end{align*}
where the third inequality follows from a pigeonhole argument which we
present as lemma~\ref{l-pigeonhole} below, and the last inequality
sets free parameter $\tau_t = \sqrt{\ydim \xdim (1 + \log T)
/ (4 t \log \xdim)}$.
\end{proof}
\begin{lemma}
\label{l-pigeonhole}
Consider a process that at each time $t$ selects a single index $a_t$
from $\{1, \ldots, q\}$ with probability $p^t_{a_t}$. Let $n^t_{i}$ denote the
count of the number of times index $i$ has been selected before time $t$, and
assume that $T \geq 1$. Then
\[
  \sum_{t=1}^T \sum_{i=1}^{q} p^t_{i} / (1 \vee n^t_{i}) \leq q (1 + \log T).
\]
\end{lemma}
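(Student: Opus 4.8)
The plan is to prove the bound by a pigeonhole (counting) argument that charges each round to a single index and then aggregates the resulting harmonic series across the $q$ indices. The starting observation is that the inner sum at round $t$ is exactly a re-expression in terms of the index actually drawn: since $a_t$ is sampled from $p^t$ and both $p^t_i$ and $n^t_i$ are determined by the history $\Fc_t$, we have the identity $\sum_{i=1}^q p^t_i/(1 \vee n^t_i) = \Expect^t\left[1/(1 \vee n^t_{a_t})\right]$. This is what makes the count structure usable: it lets me replace the probability-weighted sum over all indices by the reciprocal count of the single selected index $a_t$, so that the quantity to be controlled becomes one in which, at each round, only the drawn index is charged and its recorded count increases by exactly one.

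The core step is the per-index harmonic bound. I would partition the rounds $1, \ldots, T$ according to which index was selected: if index $i$ is chosen $N_i$ times in total, with $\sum_{i=1}^q N_i = T$, then on its successive selections the count $n^t_i$ takes the values $0, 1, \ldots, N_i - 1$, so $i$ contributes exactly $\sum_{s=0}^{N_i-1} 1/(1 \vee s)$, which is bounded by $1 + \log N_i$. The point of the pigeonhole is that no scheduling of selections — adversarial or otherwise — can make an index pay more than the ordered reciprocals $1, 1/2, 1/3, \ldots$, so this per-index bound holds against any opponent.

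Finally I would aggregate across indices. Summing $1 + \log N_i$ over the $m \leq q$ indices that are actually selected gives $m + \sum_i \log N_i \leq m\bigl(1 + \log(T/m)\bigr)$ by concavity of the logarithm under the constraint $\sum_i N_i = T$; since $m(1 + \log(T/m))$ is non-decreasing in $m$ on $m \leq T$, this is at most $q(1 + \log(T/q)) \leq q(1 + \log T)$, with $T \geq 1$ keeping the logarithm non-negative and indices that are never selected contributing nothing rather than $\log 0$. This yields the claimed $q(1 + \log T)$. The step I expect to be most delicate is the first one: the weights $p^t_i$ are a priori decoupled from the realized counts $n^t_i$, so the collapse to a single charged index per round must be justified carefully through the conditional-expectation identity, and the $1 \vee$ flooring at the first selection of each index has to be accounted for exactly so that the leading constant comes out as $1$ rather than larger.
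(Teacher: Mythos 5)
Your overall route is the paper's route: the same conditional-expectation identity $\sum_{i=1}^q p^t_i/(1\vee n^t_i) = \Expect^t\left[1/(1\vee n^t_{a_t})\right]$ (the paper's first two lines), the same regrouping of rounds by the selected index, and a harmonic-series estimate at the end; your concavity step $\sum_i \log N_i \leq m\log(T/m)$ is a mild sharpening that the paper does without, since it simply bounds each index's harmonic sum by $\sum_{t=1}^T 1/t \leq 1+\log T$.

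However, your per-index bound fails, and it fails at exactly the point you flagged as delicate. If index $i$ is selected $N_i \geq 1$ times, its contribution is
\[
\sum_{s=0}^{N_i-1} \frac{1}{1\vee s} \;=\; 1 + \sum_{s=1}^{N_i-1}\frac{1}{s} \;=\; 1 + H_{N_i-1},
\]
and $H_{N_i-1} \geq \log N_i$ (since $1/s \geq \log\frac{s+1}{s}$), strictly for $N_i \geq 2$: for $N_i=2$ the contribution is $2$, whereas your claimed bound is $1+\log 2 \approx 1.69$. The flooring $1\vee{}\cdot{}$ makes the first \emph{two} selections each cost a full $1$, so the honest per-index bound is $2+\log N_i$, and your aggregation then delivers $q(2+\log T)$, not $q(1+\log T)$. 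This cannot be repaired to the claimed constant in general: the lemma as stated already fails at $q=1$, $T=2$, where the left side is $1/(1\vee 0)+1/(1\vee 1)=2 > 1+\log 2$. Notably, the paper's own proof commits the matching off-by-one: its third display asserts the equality $\sum_{t=1}^T (1\vee n^t_{a_t})^{-1} = \sum_{i=1}^q \sum_{t=1}^{1\vee n^T_i} 1/t$, which silently drops one of the two unit charges incurred at each index's first two selections (in the $q=1$, $T=2$ example the left side is $2$ and the right side is $1$). So you reproduced the paper's argument faithfully, including its flaw; what both arguments actually establish is $q(2+\log T)$, which is harmless downstream since the lemma enters the K-learning regret bound under a square root and only the constant changes.
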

\begin{proof}
This follows from a straightforward application of the pigeonhole principle,
\begin{align*}
  \sum_{t=1}^T \sum_{i=1}^{q} p_i^t / (1 \vee n_i^t)
  &= \sum_{t=1}^T \Expect_{a_t \sim p^t} (1 \vee n^t_{a_t})^{-1}  \\
&= \Expect_{a_0 \sim p^0, \ldots,
  a_T \sim p^t} \sum_{t=1}^T (1 \vee n^t_{a_t})^{-1} \\
&= \Expect_{a_0 \sim p^0, \ldots,
a_T \sim p^t} \sum_{i=1}^{q} \sum_{t=1}^{1 \vee n_i^T} 1/t \\
&\leq \sum_{i=1}^{q} \sum_{t=1}^{T} 1/t  \\
  &\leq q (1 + \log T),
\end{align*}
since $n_i^t$ is the count \emph{before} time $t$ we have $1 \vee n_i^T \leq T$
for each $i$ and where the last inequality follows since $\sum_{t=1}^{T} 1/t
\leq 1 + \int_{t=1}^{T} 1/t = 1 + \log T$.
\end{proof}

Now we return to the simple $2\times 2$ problem with payoff matrix in equation
\eqref{e-2x2game}.  Recall that Thompson sampling will incur linear regret in
this setting since it will select the second column with probability $1/4$ each
round.  By contrast, a quick calculation tells us that K-learning in this
situation will always play the strategy $(1,0)$, thereby incurring zero regret,
and will play this forever even though the uncertainty about the value of $r$ is
never resolved. This is demonstrated in Figure~\ref{fig-2x2game}.

\begin{figure}
  \begin{center}
    \includegraphics[width=0.9\linewidth]{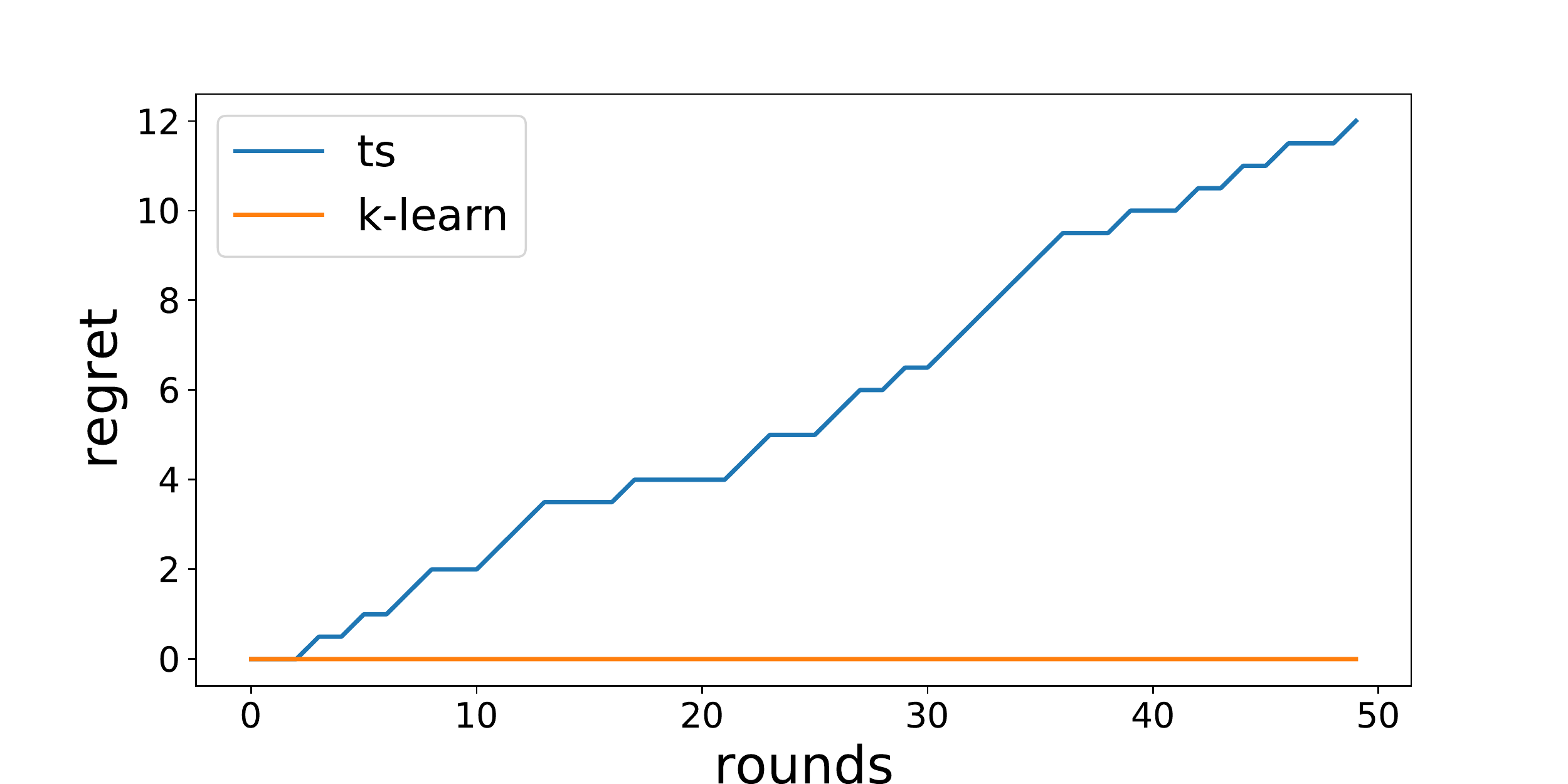}
  \end{center}
  \caption{Regret for game matrix Eq.~(\ref{e-2x2game}).}
  \label{fig-2x2game}
\end{figure}

\section{Adversarial bandit algorithms}

In the adversarial bandit framework, an adversary and learner interact sequentially over $T$ rounds. In each round $t$,
the learner chooses a distribution $x_t \in \Delta_k$ and the adversary simultaneously chooses a loss vector $\ell_t \in [0,1]^k$.
Any algorithm designed for adversarial bandits can be used in our setting by choosing $\ell_{ti} = -A_{j_ti}$.
The usual definition of the regret in this notation is
\begin{align*}
\underbrace{\Expect\left[\max_i \sum_{t=1}^T A_{j_t i} - \sum_{t=1}^T r_t\right]}_{\text{adversarial regret}}
  \geq \Rc(A, {\rm alg}, T)\,.
\end{align*}
Hence, an algorithm with small adversarial regret automatically enjoys small regret relative to the Nash strategy \citep{hannan1957approximation}.
There are now many algorithms for adversarial bandits,
the most well-known being Exp3 \citep{auer1995gambling}.
The basic algorithm uses
importance-weighting to estimate the rewards for each action and samples from a
carefully tuned exponential weights distribution. Let $\hat r_{ti}$ be the importance-weighted
estimate of the reward of action $i$ in round $t$:
\begin{align*}
\hat r_{ti} = \frac{r_t\ones(i_t = i)}{x_{ti}}\,,
\end{align*}
where the distribution of the player $x_t$ is given by
\begin{align*}
x_{ti} = \frac{\gamma_t}{k} + (1 - \gamma_t) \frac{\exp\left(\rho_t \sum_{s=1}^{t-1} \hat r_{si}\right)}{\sum_{j=1}^k \exp\left(\rho_t \sum_{s=1}^{t-1} \hat r_{sj}\right)}\,.
\end{align*}
When $\rho_t$ and $\gamma_t$ are tuned appropriately, then the regret of Exp3 relative to the best action in hindsight is
\begin{align}
\Expect\left[\max_i \sum_{t=1}^T A_{j_t i} - \sum_{t=1}^T r_t \right] = O\left(\sqrt{kT \log k}\right)\,. \label{eq:exp3-regret}
\end{align}
The reader will notice that this bound is independent of the number of actions
of the opponent, which was not true for UCB or K-learning.  Another strength of
Exp3 and similar algorithms is that the alternative notion of regret means they
can exploit weak opponents.  On the other hand, Exp3 is empirically much worse
than K-learning and UCB. The reason is that Exp3 does not use the structure of the game
and cannot quickly eliminate actions that do not play a strong role in any
plausible Nash equilibrium. Furthermore, in many cases the goal is
to learn the Nash equilibrium (if possible), \ie, to have `solved' the game, not
just to exploit the opponent. For example, since Exp3 does not converge to the
minimax solution in general, it does not solve the \emph{robust} bandit problem
and suffers from high variance of $\Omega(T^2)$ \cite[Ex.~11.6]{lattimore2018bandit}.
Concretely, consider playing rock-paper-scissors against an opponent with fixed
strategy $(0.2, 0.2, 0.6)$. Exp3 against this opponent will converge towards
playing $(1, 0, 0)$. However, a UCB or K-learning agent will learn to play the Nash strategy
$(1/3, 1/3, 1/3)$, and will not be exploitable by any opponent (\ie, they will
be robust), even though they only played against a weak player. If suddenly the
opponent changes then Exp3 will suffer significantly larger losses than the
robust algorithms even though the final regret may not be worse. We
shall demonstrate this phenomenon in the numerical experiments.

There are many adaptations of Exp3. The main threads are (a) using the online
convex optimisation view and modifying the regularizer
\citep{audibert2009minimax,BCL17,CL18}, for example, and (b) modifying the loss
estimates to obtain high probability regret or adaptive bounds
\citep{exp3,KNVM14,Neu15,AHR08}.
None of these algorithms are able to handle the additional knowledge of the
opponent's action and we do not believe any will improve on Exp3 by a
significant margin empirically.  The partial monitoring framework \textit{can}
incorporate knowledge of the opponent's action \citep{Rus99}.  Partial monitoring
is now reasonably well understood theoretically \citep{BFPRS14} and sensible
algorithms exist \citep{LS19pmsimple}. Regrettably, however, even with Bernoulli
rewards, the matrix games studied here can only be modelled by exponentially
large partial monitoring games for which existing algorithms are not practical.
The case of two-player matrix games where the
matrix $A$ is selected adversarially at each timestep was considered in
\citep{cardoso19a}, however, that work assumed control of \emph{both} players, so
is not applicable here.

\section{Numerical experiments}

In this section we present numerical results comparing the performance
of the algorithms we have discussed so far.
In most cases we are interested in measuring the empirical regret on a
particular problem. Since this depends on the opponent we shall report
cumulative \emph{absolute} regret, \ie,
\[
\sum_{t=1}^T |V_A^\star  - y_t^T A x_t |,
\]
for fixed $A$.  This
is meaningful because we primarily focus on two cases: self-play and against a
best-response opponent.  In self-play the algorithm is competing against another
player using the same algorithm with the same information and so the cumulative
absolute regret is a loosely measure of how far the players are from the Nash
equilibrium.
The best-response opponent
knows the exact value of $A$ and the agent's strategy at every round, and so can
compute the action that minimizes the expected payoff.  In this case the regret
the agent suffers is always positive, so the absolute regret is the same as the
usual notion of regret.

When running Exp3 we used the following parameters
\[
\gamma_t = \min(\sqrt{\xdim \log \xdim / t}, 1), \quad
\rho_t = \sqrt{2 \log \xdim / t \xdim}.
\]

\subsection{Rock-paper-scissors}
In the classic children's game rock-paper-scissors, the payoff matrix is given by
\begin{center}
\begin{tabular}{c| c c c}
& R & P & S\\
\hline
R & 0 & 1 & -1\\
P & -1 & 0 & 1\\
S &1 & -1 & 0,
\end{tabular}
\end{center}
which defines a symmetric game
with Nash equilibrium $(1/3, 1/3, 1/3)$ for both players.
When comparing the techniques on this problem we add noise $\eta_t \sim \Nc(0,
1)$ to the payoff, and use prior $\Nc(0, 1)$ for each entry in the matrix for
the Bayesian algorithms.  We ran each experiment for $1000$ rounds and averaged
the results over $100$ seeds.

In Figure~\ref{fig-rps-self-play} we present the self-play results and in
Figure~\ref{fig-rps-br} we show the results against a best-response opponent. In
both cases we plot the absolute regret of each algorithm and the KL-divergence
of the policy produced by each algorithm to the Nash equilibrium policy. In
self-play K-learning and UCB perform well with low regret and relatively quick
convergence towards the Nash. Although Thompson sampling doesn't enjoy a regret
bound against all opponents, it still appears to perform well in self-play.
Exp3, which does not use the matrix structure of the problem, does not converge
to the Nash equilibrium in self-play in this case. This is shown by the linear
absolute regret and the KL-divergence to the Nash saturating at a constant.
Although Exp3 has a regret bound, the two competing instantiations oscillate
around the Nash together, sometimes winning and sometimes losing (on average)
against their opponent. It is clear from this result that Exp3 is not guaranteed
to solve the game and converge to the Nash equilibrium, and so cannot solve the
robust bandit problem in general without further assumptions.  Against the
best-response opponent the major difference is the dramatic decline in
performance for Thompson sampling. It is clear that even in this simple case TS
is easily exploited by an informed opponent and suffers significant losses. In
contrast to self-play, against the best-response opponent Exp3 \emph{will}
converge to the Nash equilibrium, since it satisfies a regret bound and the Nash
is the only strategy that is not exploitable.  This is shown by the (slow)
convergence in KL-divergence between the Exp3 policy and the Nash towards zero.

\begin{figure}
\centering
\begin{subfigure}{0.7\linewidth}
\centering
  \includegraphics[width=\linewidth]{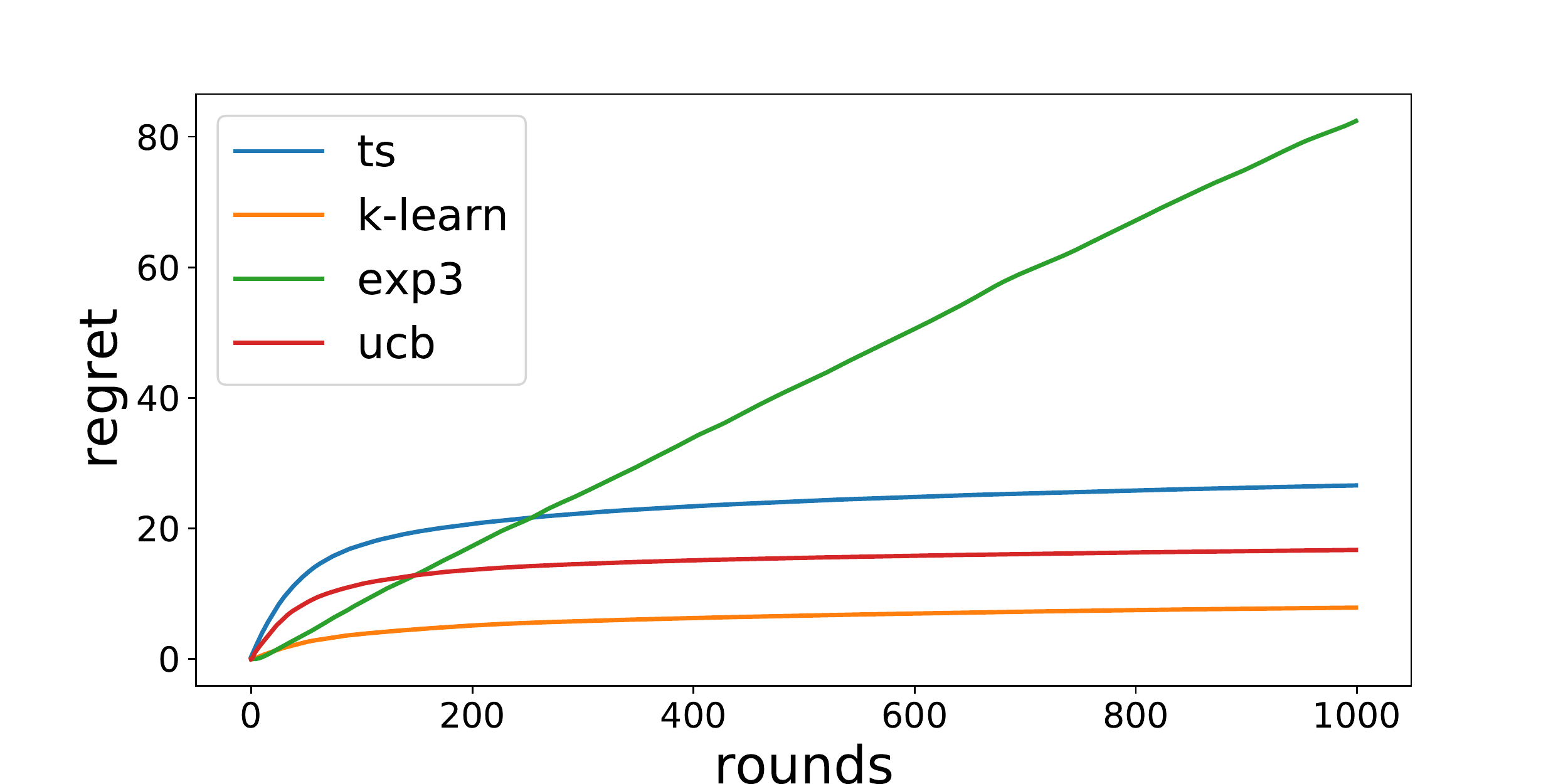}
  \caption{Absolute regret.}
\label{fig-rps}
\end{subfigure}
\begin{subfigure}{0.7\linewidth}
\centering
  \includegraphics[width=\linewidth]{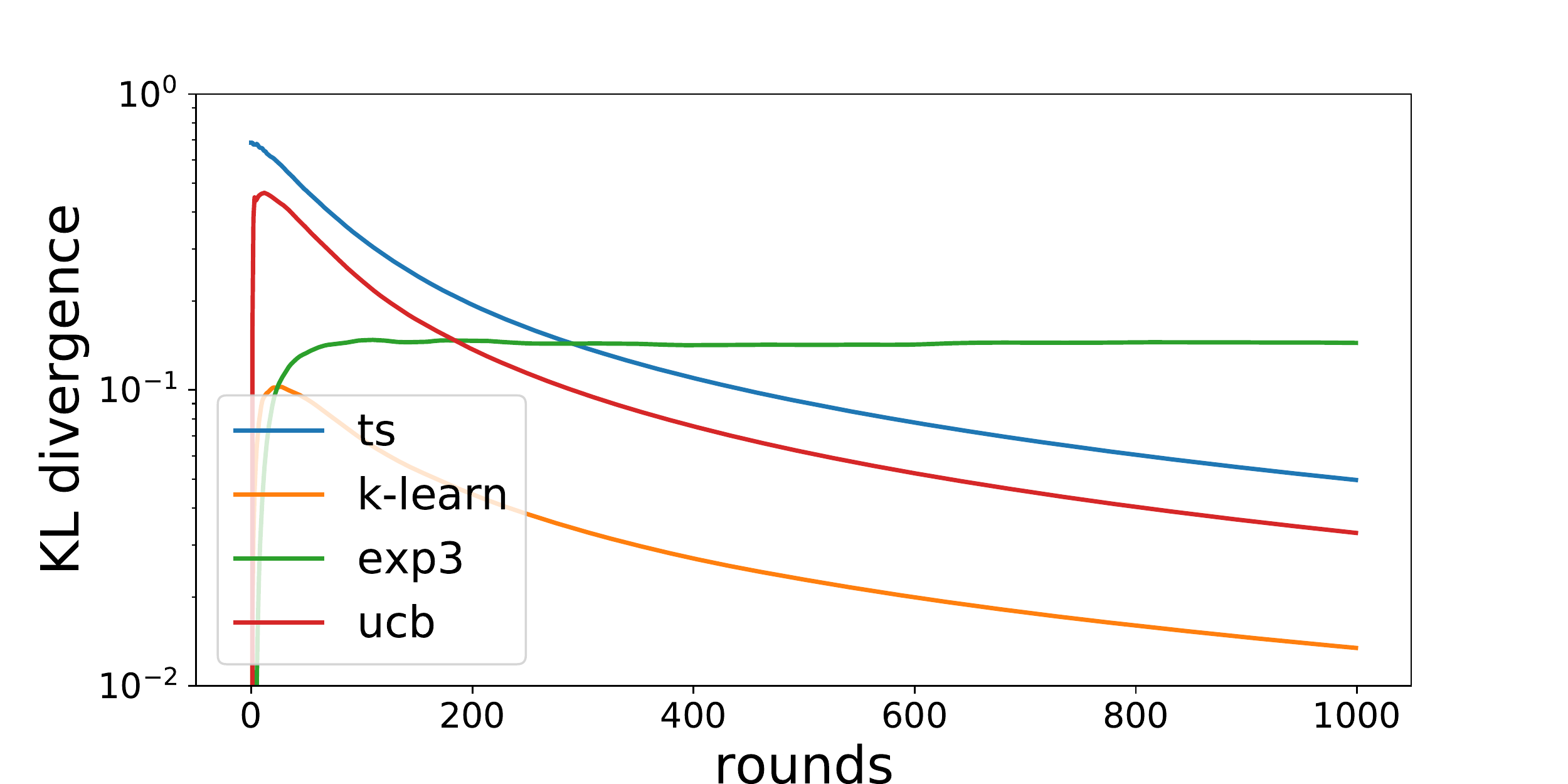}
  \caption{KL divergence to Nash.}
\label{fig-rps-kldiv}
\end{subfigure}
\caption{Rock-paper-scissors self-play.}
\label{fig-rps-self-play}
\end{figure}

In Figure~\ref{fig-rps-compete} we compare the performance of the algorithms
with regret bounds competing against each other with identical information. The
legend displays `alg1 vs alg2' for different choices of alg1 and alg2, and
indicates that alg1 is playing as the maximizer and alg2 is the minimizer.  We
are plotting the regret (not absolute regret) from the point of view of the
maximizer (alg1). If the regret is positive, it means that the minimizer (alg2)
is winning on average.  Since rock-paper-scissors is symmetric there is no
advantage to being one player or the other so this is a fair head-to-head
comparison. From the figure it is immediately obvious that the algorithms that
leverage the matrix structure, K-learning and UCB, are easily beating Exp3 on
average.  Although Exp3 has a regret bound, it requires a long time to learn and
in the meantime it suffers large losses against the optimistic approaches. When
K-learning competes against UCB the algorithms are roughly evenly matched,
however it appears that UCB has a slight advantage in this case.

\begin{figure}
\centering
\begin{subfigure}{0.7\linewidth}
\centering
  \includegraphics[width=\linewidth]{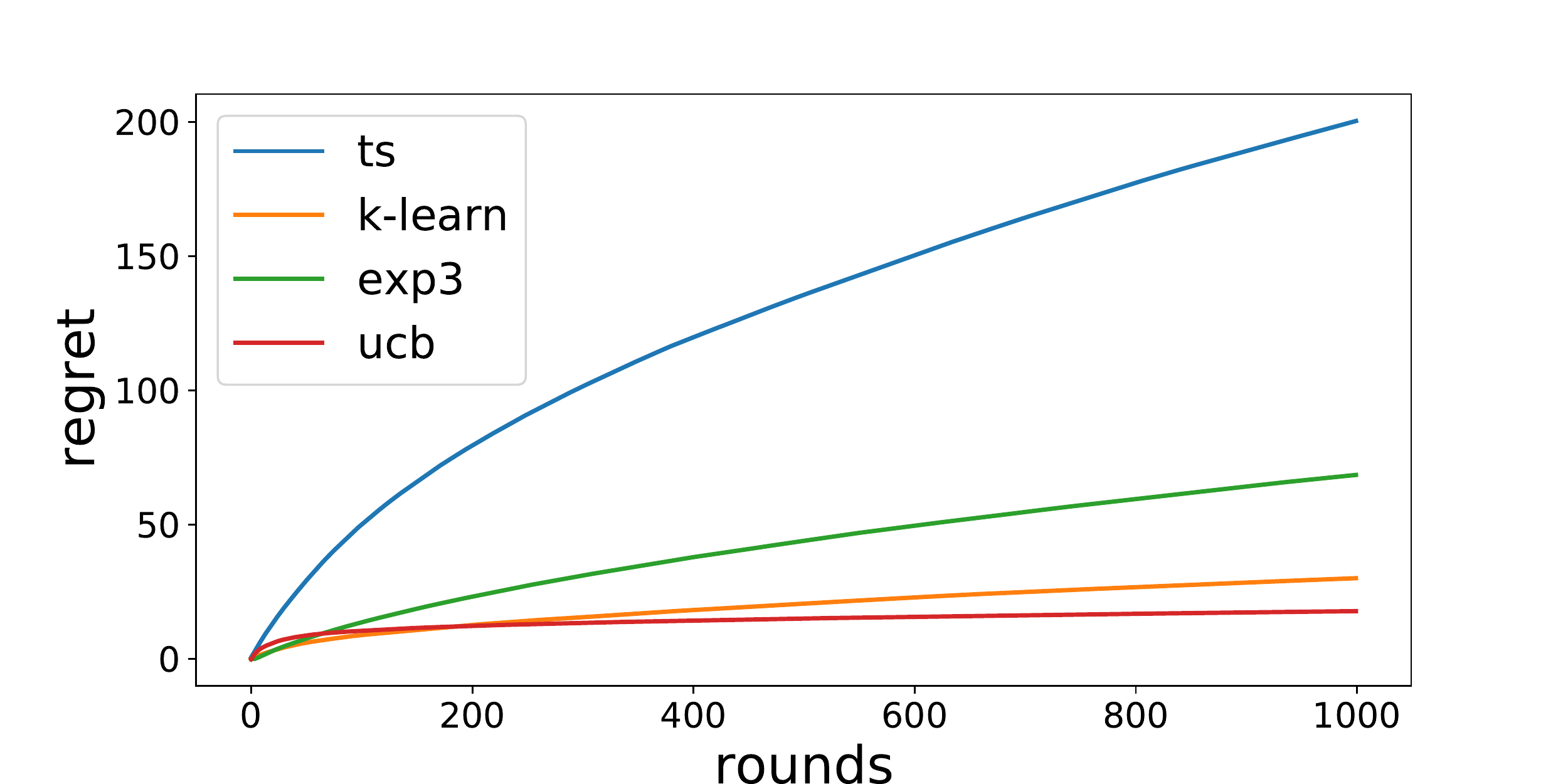}
  \caption{Regret.}
\label{fig-rps-br-regret}
\end{subfigure}
\begin{subfigure}{0.7\linewidth}
\centering
  \includegraphics[width=\linewidth]{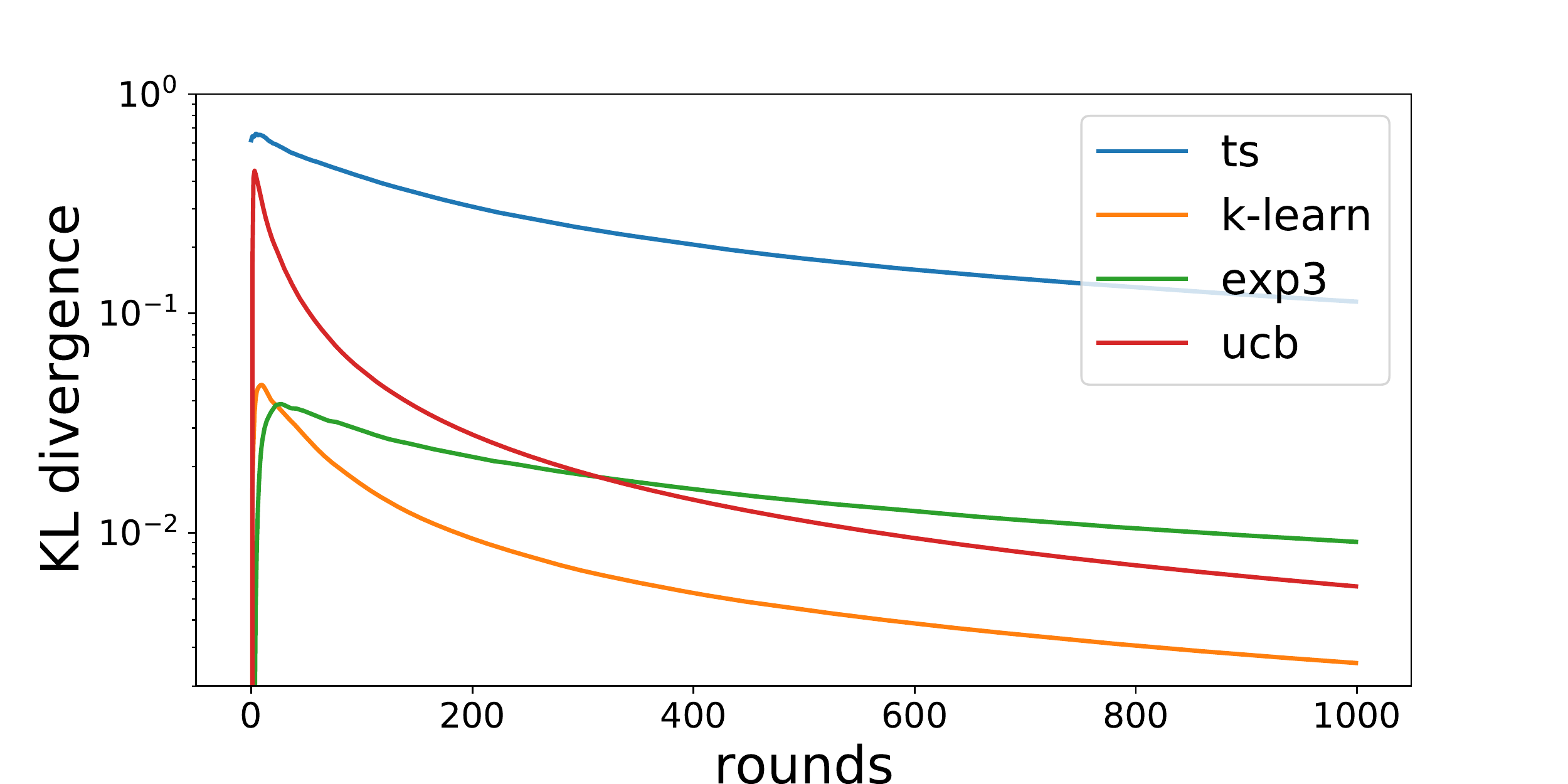}
  \caption{KL divergence to Nash.}
\label{fig-rps-br-kldiv}
\end{subfigure}
\caption{Rock-paper-scissors vs.~best-response.}
\label{fig-rps-br}
\end{figure}

\begin{figure}
\centering
  \includegraphics[width=0.8\linewidth]{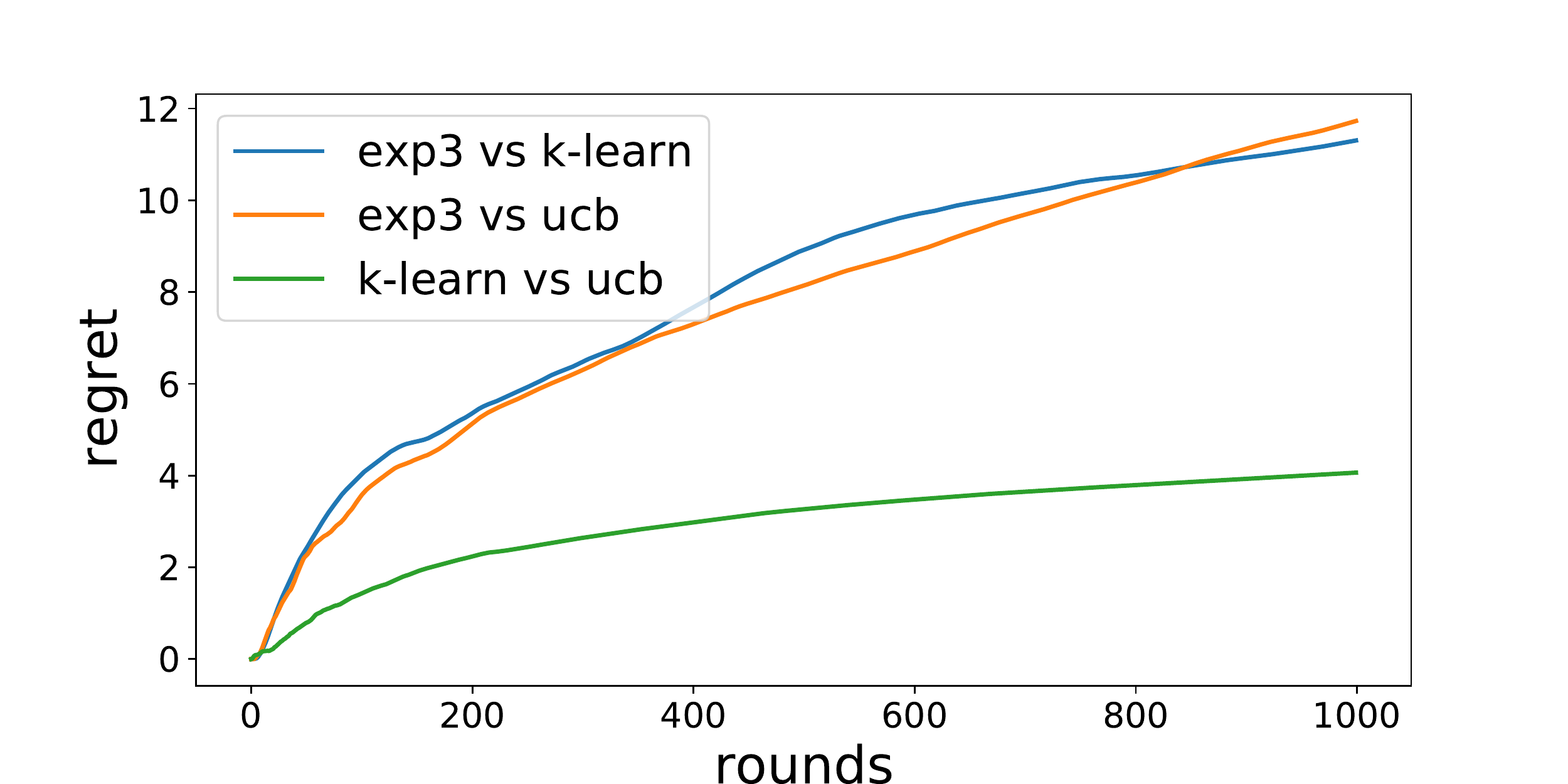}
\caption{Rock-paper-scissors head-to-head regret.}
\label{fig-rps-compete}
\end{figure}

\subsection{Robust bandits}
In the robust bandit problem the rewards the agent receives are partially
determined by outcomes selected by nature, and the agent wants a policy that is
robust to all possible outcomes. This problem can be formulated as a game to
which we can apply the algorithms we have developed.  To test their performance
we generated a random game with $\xdim=10$ agent actions and $\ydim=5$ possible
outcomes for each action, where each entry of $A$ was sampled IID from $\Nc(0.5,
2.0)$. Nature sampled actions from a fixed policy that changed randomly every
$50$ time-steps.  We compare the algorithms presented in this manuscript against
naive UCB and naive Thompson sampling, which treated the problem as though it
was a standard stochastic multi-armed bandit problem. We ran each algorithm for
$1000$ time-steps averaged over $100$ random seeds and we plot the histogram of
the per time-period rewards in Figure~\ref{fig-rob-hists}. In
Table~\ref{t-rob-bandit} we show what proportion of the time each algorithm
suffered a negative reward, as well as the average reward of each approach. It
is clear that the naive approaches suffer from negative rewards more frequently,
\ie, they are not robust to the changing conditions of nature. For example,
K-learning suffers negative rewards almost $16\times$ less frequently than the
naive approaches, which both suffered negative rewards about $15\%$ of the time,
at the expense of slightly lower average reward. We can also see that Exp3 is
not robust since it too suffers significantly more negative rewards than
K-learning and UCB.  Since Exp3 attempts to exploit the nature player, it can
suffer large negative rewards for several periods when nature switches
distribution.  For completeness we include the results of the same problem
against a best-response opponent, summarized in Table~\ref{t-rob-bandit-br}.
Unsurprisingly the naive approaches are trivially exploitable by the BR opponent
and suffer large negative rewards at every time-step. Again, Exp3 suffers
significantly more negative reward than the robust approaches in this case.
K-learning has both the largest average reward and the least percentage of
negative rewards overall, followed by UCB.

\begin{figure}
  \includegraphics[width=1.\linewidth]{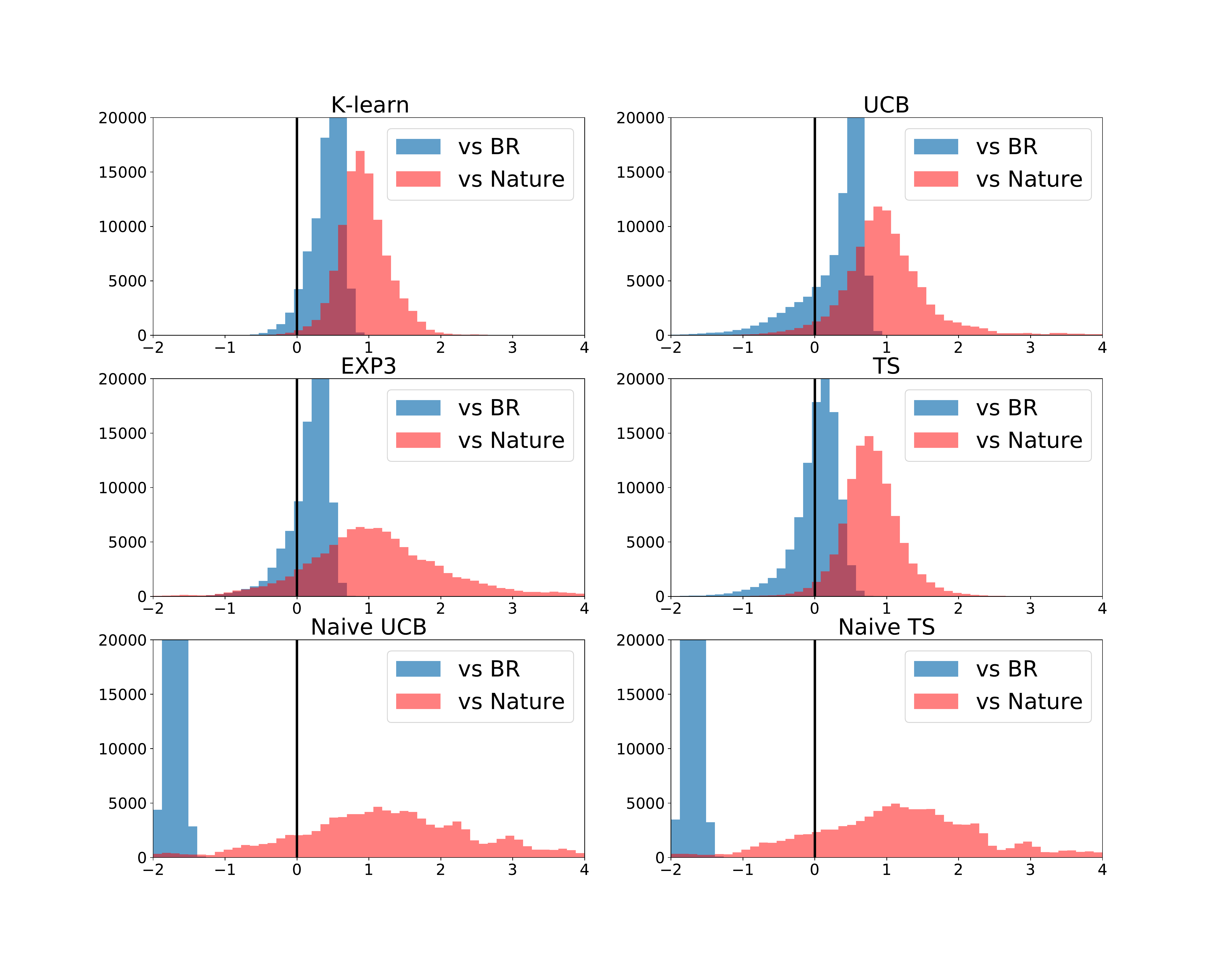}
  \caption{Reward histograms on robust bandit problem.}
\label{fig-rob-hists}
\end{figure}

\begin{table}
\begin{center}
\begin{tabular}{c | c c}
&\% returns $< 0$ & mean return \\
TS & 2.3\% & 0.787 \\
UCB & 3.4\% & 0.995 \\
K-learn & 0.7\% & 0.935 \\
Exp3 & 9.0\% & 1.092 \\
Naive TS & 15.9\% & 1.159 \\
Naive UCB & 14.1\% & 1.269 \\
\end{tabular}
\end{center}
\caption{Robust bandit problem vs Nature.}
\label{t-rob-bandit}
\end{table}

\begin{table}
\begin{center}
\begin{tabular}{c | c c}
&\% returns $< 0$ & mean return \\
TS & 37.4\% & 0.027 \\
UCB & 18.8\% & 0.299 \\
K-learn & 5.1\% & 0.421 \\
Exp3 & 19.5\% & 0.180 \\
Naive TS & 100.0\% & -1.703 \\
Naive UCB & 100.0\% & -1.710 \\
\end{tabular}
\end{center}
\caption{Robust bandit problem vs best-response.}
\label{t-rob-bandit-br}
\end{table}

\section{Conclusion}
The usual analysis of matrix games assumes that both players have perfect
knowledge of the payoffs.  We extended this to the case where the matrix that
specifies the game is initially unknown to the players and must be learned about
from experience, specifically from noisy bandit feedback. We showed that two
previously published algorithms, UCB and K-learning, can be extended to this
case and enjoy a sublinear regret bound, even against informed opponents that
can compute a best-response to their strategies. We also showed a
counter-example that rules out a sublinear regret bound for Thompson sampling
under the same conditions.  This difference between deterministically optimistic
and stochastically optimistic algorithms is a significant departure from the
single-player case. We supported our findings with numerical
experiments that showed a significant advantage of these approaches when
compared to both Thompson sampling and Exp3.

We conclude with a brief discussion about lower bounds. The two optimistic
algorithms we developed in this manuscript have $O(\sqrt{T})$ regret upper
bounds. One might speculate about the existence of matching lower bounds. Since
the matrix game generalizes the stochastic multi-armed bandit problem we know we
cannot improve upon this bound in the worst-case. However, this does not
preclude the existence of an instance-dependent logarithmic regret bound. We conjecture
that such a bound is not possible in general against all opponents, though it
may be possible in more benign cases such as self-play with identical
information. We leave exploring this to future work.

\bibliography{k_learn_games}

\begin{thebibliography}{48}
\providecommand{\natexlab}[1]{#1}
\providecommand{\url}[1]{\texttt{#1}}
\expandafter\ifx\csname urlstyle\endcsname\relax
  \providecommand{\doi}[1]{doi: #1}\else
  \providecommand{\doi}{doi: \begingroup \urlstyle{rm}\Url}\fi

\bibitem[Abernethy et~al.(2008)Abernethy, Hazan, and Rakhlin]{AHR08}
J.~D. Abernethy, E.~Hazan, and A.~Rakhlin.
\newblock Competing in the dark: An efficient algorithm for bandit linear
  optimization.
\newblock In \emph{Proceedings of the 21st Annual Conference on Learning
  Theory}, pages 263--274. Omnipress, 2008.

\bibitem[Audibert and Bubeck(2009)]{audibert2009minimax}
Jean-Yves Audibert and S{\'e}bastien Bubeck.
\newblock Minimax policies for adversarial and stochastic bandits.
\newblock In \emph{COLT}, pages 217--226, 2009.

\bibitem[Auer et~al.(1995)Auer, Cesa-Bianchi, Freund, and
  Schapire]{auer1995gambling}
Peter Auer, Nicolo Cesa-Bianchi, Yoav Freund, and Robert~E Schapire.
\newblock Gambling in a rigged casino: The adversarial multi-armed bandit
  problem.
\newblock In \emph{Proceedings of IEEE 36th Annual Foundations of Computer
  Science}, pages 322--331. IEEE, 1995.

\bibitem[Auer et~al.(2002{\natexlab{a}})Auer, Cesa-Bianchi, and
  Fischer]{auer2002finite}
Peter Auer, Nicolo Cesa-Bianchi, and Paul Fischer.
\newblock Finite-time analysis of the multiarmed bandit problem.
\newblock \emph{Machine learning}, 47\penalty0 (2-3):\penalty0 235--256,
  2002{\natexlab{a}}.

\bibitem[Auer et~al.(2002{\natexlab{b}})Auer, Cesa-Bianchi, Freund, and
  Schapire]{exp3}
Peter Auer, Nicolo Cesa-Bianchi, Yoav Freund, and Robert~E Schapire.
\newblock The nonstochastic multiarmed bandit problem.
\newblock \emph{SIAM journal on computing}, 32\penalty0 (1):\penalty0 48--77,
  2002{\natexlab{b}}.

\bibitem[Bace(2000)]{bace2000intrusion}
Rebecca~Gurley Bace.
\newblock \emph{Intrusion detection}.
\newblock Sams Publishing, 2000.

\bibitem[Badanidiyuru et~al.(2013)Badanidiyuru, Kleinberg, and
  Slivkins]{badanidiyuru2013bandits}
Ashwinkumar Badanidiyuru, Robert Kleinberg, and Aleksandrs Slivkins.
\newblock Bandits with knapsacks.
\newblock In \emph{2013 IEEE 54th Annual Symposium on Foundations of Computer
  Science}, pages 207--216. IEEE, 2013.

\bibitem[Bart{\'o}k et~al.(2014)Bart{\'o}k, Foster, P{\'a}l, Rakhlin, and
  Szepesv{\'a}ri]{BFPRS14}
G.~Bart{\'o}k, D.~P. Foster, D.~P{\'a}l, A.~Rakhlin, and Cs. Szepesv{\'a}ri.
\newblock Partial monitoring---classification, regret bounds, and algorithms.
\newblock \emph{Mathematics of Operations Research}, 39\penalty0 (4):\penalty0
  967--997, 2014.

\bibitem[Blum and Mansour(2007)]{blum2007learning}
Avrim Blum and Yishay Mansour.
\newblock Learning, regret minimization, and equilibria.
\newblock \emph{Algorithmic game theory}, pages 79--102, 2007.

\bibitem[Boyd and Vandenberghe(2004)]{boyd2004convex}
Stephen Boyd and Lieven Vandenberghe.
\newblock \emph{Convex optimization}.
\newblock Cambridge university press, 2004.

\bibitem[Bubeck et~al.(2018)Bubeck, Cohen, and Li]{BCL17}
S.~Bubeck, M.~Cohen, and Y.~Li.
\newblock Sparsity, variance and curvature in multi-armed bandits.
\newblock In F.~Janoos, M.~Mohri, and K.~Sridharan, editors, \emph{Proceedings
  of Algorithmic Learning Theory}, volume~83 of \emph{Proceedings of Machine
  Learning Research}, pages 111--127. PMLR, 07--09 Apr 2018.

\bibitem[Bubeck and Cesa-Bianchi(2012)]{bubeck2012regret}
S{\'e}bastien Bubeck and Nicolo Cesa-Bianchi.
\newblock Regret analysis of stochastic and nonstochastic multi-armed bandit
  problems.
\newblock \emph{Foundations and Trends{\textregistered} in Machine Learning},
  5\penalty0 (1):\penalty0 1--122, 2012.

\bibitem[Cardoso et~al.(2019)Cardoso, Abernethy, Wang, and Xu]{cardoso19a}
Adrian~Rivera Cardoso, Jacob Abernethy, He~Wang, and Huan Xu.
\newblock Competing against {N}ash equilibria in adversarially changing
  zero-sum games.
\newblock In Kamalika Chaudhuri and Ruslan Salakhutdinov, editors,
  \emph{Proceedings of the 36th International Conference on Machine Learning},
  volume~97 of \emph{Proceedings of Machine Learning Research}, pages 921--930,
  Long Beach, California, USA, 09--15 Jun 2019. PMLR.
\newblock URL \url{http://proceedings.mlr.press/v97/cardoso19a.html}.

\bibitem[Caro and Gupta(2013)]{caro2013robust}
Felipe Caro and Aparupa~Das Gupta.
\newblock Robust control of the multi-armed bandit problem.
\newblock \emph{Annals of Operations Research}, pages 1--20, 2013.

\bibitem[Cesa-Bianchi and Lugosi(2006)]{cesa2006prediction}
Nicolo Cesa-Bianchi and G{\'a}bor Lugosi.
\newblock \emph{Prediction, learning, and games}.
\newblock Cambridge university press, 2006.

\bibitem[Domahidi et~al.(2013)Domahidi, Chu, and Boyd]{ecos}
A.~Domahidi, E.~Chu, and S.~Boyd.
\newblock {ECOS}: {A}n {SOCP} solver for embedded systems.
\newblock In \emph{European Control Conference {(ECC)}}, pages 3071--3076,
  2013.

\bibitem[Ghosh and Boyd(2003)]{cvxccv}
Arpita Ghosh and Stephen Boyd.
\newblock Minimax and convex-concave games, 2003.
\newblock URL \url{https://web.stanford.edu/class/ee392o/cvxccv.pdf}.

\bibitem[Hannan(1957)]{hannan1957approximation}
James Hannan.
\newblock Approximation to bayes risk in repeated play.
\newblock \emph{Contributions to the Theory of Games}, 3:\penalty0 97--139,
  1957.

\bibitem[Immorlica et~al.(2019)Immorlica, Sankararaman, Schapire, and
  Slivkins]{immorlica2019adversarial}
Nicole Immorlica, Karthik~Abinav Sankararaman, Robert Schapire, and Aleksandrs
  Slivkins.
\newblock Adversarial bandits with knapsacks.
\newblock In \emph{2019 IEEE 60th Annual Symposium on Foundations of Computer
  Science (FOCS)}, pages 202--219. IEEE, 2019.

\bibitem[Kaufmann et~al.(2012)Kaufmann, Capp{\'e}, and
  Garivier]{kaufmann2012bayesian}
Emilie Kaufmann, Olivier Capp{\'e}, and Aur{\'e}lien Garivier.
\newblock On {B}ayesian upper confidence bounds for bandit problems.
\newblock In \emph{Artificial Intelligence and Statistics}, pages 592--600,
  2012.

\bibitem[Kim and Lim(2016)]{kim2016robust}
Michael~Jong Kim and Andrew~EB Lim.
\newblock Robust multiarmed bandit problems.
\newblock \emph{Management Science}, 62\penalty0 (1):\penalty0 264--285, 2016.

\bibitem[Koc\'{a}k et~al.(2014)Koc\'{a}k, Neu, Valko, and Munos]{KNVM14}
T.~Koc\'{a}k, G.~Neu, M.~Valko, and R.~Munos.
\newblock Efficient learning by implicit exploration in bandit problems with
  side observations.
\newblock In Z.~Ghahramani, M.~Welling, C.~Cortes, N.~D. Lawrence, and K.~Q.
  Weinberger, editors, \emph{Advances in Neural Information Processing Systems
  27}, pages 613--621. Curran Associates, Inc., 2014.

\bibitem[Lai(1987)]{lai1987adaptive}
Tze~Leung Lai.
\newblock Adaptive treatment allocation and the multi-armed bandit problem.
\newblock \emph{The Annals of Statistics}, pages 1091--1114, 1987.

\bibitem[Lattimore and Szepesv{\'a}ri(2019)]{LS19pmsimple}
T.~Lattimore and Cs. Szepesv{\'a}ri.
\newblock Exploration by optimisation in partial monitoring.
\newblock 2019.

\bibitem[Lattimore and Szepesv{\'a}ri(2020)]{lattimore2018bandit}
Tor Lattimore and Csaba Szepesv{\'a}ri.
\newblock \emph{Bandit algorithms}.
\newblock Cambridge University Press, 2020.

\bibitem[Myerson(2013)]{myerson2013game}
Roger~B Myerson.
\newblock \emph{Game theory}.
\newblock Harvard university press, 2013.

\bibitem[Nash et~al.(1950)]{nash1950equilibrium}
John~F Nash et~al.
\newblock Equilibrium points in n-person games.
\newblock \emph{Proceedings of the National Academy of Sciences of the United
  States of America}, 36\penalty0 (1):\penalty0 48--49, 1950.

\bibitem[Neu(2015)]{Neu15}
G.~Neu.
\newblock Explore no more: Improved high-probability regret bounds for
  non-stochastic bandits.
\newblock In C.~Cortes, N.~D. Lawrence, D.~D. Lee, M.~Sugiyama, and R.~Garnett,
  editors, \emph{Advances in Neural Information Processing Systems 28}, NIPS,
  pages 3168--3176. Curran Associates, Inc., 2015.

\bibitem[O'Donoghue et~al.(2016)O'Donoghue, Chu, Parikh, and Boyd]{ocpb:16}
B.~O'Donoghue, E.~Chu, N.~Parikh, and S.~Boyd.
\newblock Conic optimization via operator splitting and homogeneous self-dual
  embedding.
\newblock \emph{Journal of Optimization Theory and Applications}, 169\penalty0
  (3):\penalty0 1042--1068, June 2016.
\newblock URL \url{http://stanford.edu/~boyd/papers/scs.html}.

\bibitem[O'Donoghue(2018)]{o2018variational}
Brendan O'Donoghue.
\newblock Variational {B}ayesian reinforcement learning with regret bounds.
\newblock \emph{arXiv preprint arXiv:1807.09647}, 2018.

\bibitem[O'Donoghue et~al.(2017{\natexlab{a}})O'Donoghue, Munos, Kavukcuoglu,
  and Mnih]{o2016pgq}
Brendan O'Donoghue, Remi Munos, Koray Kavukcuoglu, and Volodymyr Mnih.
\newblock Combining policy gradient and {Q}-learning.
\newblock In \emph{International Conference on Learning Representations
  (ICLR)}, 2017{\natexlab{a}}.

\bibitem[O'Donoghue et~al.(2017{\natexlab{b}})O'Donoghue, Osband, Munos, and
  Mnih]{o2017uncertainty}
Brendan O'Donoghue, Ian Osband, Remi Munos, and Volodymyr Mnih.
\newblock The uncertainty {B}ellman equation and exploration.
\newblock \emph{arXiv preprint arXiv:1709.05380}, 2017{\natexlab{b}}.

\bibitem[O'Donoghue et~al.(2020)O'Donoghue, Osband, and Ionescu]{o2020making}
Brendan O'Donoghue, Ian Osband, and Catalin Ionescu.
\newblock Making sense of reinforcement learning and probabilistic inference.
\newblock In \emph{International Conference on Learning Representations
  (ICLR)}, 2020.

\bibitem[Oliveira(2017)]{oliveira2017applying}
Thiago Bell Felix~de Oliveira.
\newblock Applying bandit algorithms to the route choice problem.
\newblock 2017.

\bibitem[Ordeshook(1986)]{ordeshook1986game}
Peter~C Ordeshook.
\newblock \emph{Game theory and political theory: An introduction}.
\newblock Cambridge University Press, 1986.

\bibitem[Osband and Van~Roy(2017)]{osb2017optimistic}
Ian Osband and Benjamin Van~Roy.
\newblock On optimistic versus randomized exploration in reinforcement
  learning, 2017.

\bibitem[Rakhlin and Sridharan(2013)]{rakhlin2013optimization}
Sasha Rakhlin and Karthik Sridharan.
\newblock Optimization, learning, and games with predictable sequences.
\newblock In \emph{Advances in Neural Information Processing Systems}, pages
  3066--3074, 2013.

\bibitem[Russo and Van~Roy(2014)]{russo2014learning}
Daniel Russo and Benjamin Van~Roy.
\newblock Learning to optimize via posterior sampling.
\newblock \emph{Mathematics of Operations Research}, 39\penalty0 (4):\penalty0
  1221--1243, 2014.

\bibitem[Russo and Van~Roy(2016)]{russo2016information}
Daniel Russo and Benjamin Van~Roy.
\newblock An information-theoretic analysis of {T}hompson sampling.
\newblock \emph{The Journal of Machine Learning Research}, 17\penalty0
  (1):\penalty0 2442--2471, 2016.

\bibitem[Russo et~al.(2018)Russo, Van~Roy, Kazerouni, Osband, Wen,
  et~al.]{russo2018tutorial}
Daniel~J Russo, Benjamin Van~Roy, Abbas Kazerouni, Ian Osband, Zheng Wen,
  et~al.
\newblock A tutorial on thompson sampling.
\newblock \emph{Foundations and Trends{\textregistered} in Machine Learning},
  11\penalty0 (1):\penalty0 1--96, 2018.

\bibitem[Rustichini(1999)]{Rus99}
A.~Rustichini.
\newblock Minimizing regret: The general case.
\newblock \emph{Games and Economic Behavior}, 29\penalty0 (1):\penalty0
  224--243, 1999.

\bibitem[Serrano(2015)]{serrano2015algorithms}
Santiago~Akle Serrano.
\newblock \emph{Algorithms for unsymmetric cone optimization and an
  implementation for problems with the exponential cone}.
\newblock PhD thesis, Stanford University, 2015.

\bibitem[Thompson(1933)]{thompson1933likelihood}
William~R Thompson.
\newblock On the likelihood that one unknown probability exceeds another in
  view of the evidence of two samples.
\newblock \emph{Biometrika}, 25\penalty0 (3/4):\penalty0 285--294, 1933.

\bibitem[Villar et~al.(2015)Villar, Bowden, and Wason]{villar2015multi}
Sof{\'\i}a~S Villar, Jack Bowden, and James Wason.
\newblock Multi-armed bandit models for the optimal design of clinical trials:
  benefits and challenges.
\newblock \emph{Statistical science: a review journal of the Institute of
  Mathematical Statistics}, 30\penalty0 (2):\penalty0 199, 2015.

\bibitem[Von~Neumann(1928)]{neumann1928theorie}
John Von~Neumann.
\newblock Zur theorie der gesellschaftsspiele.
\newblock \emph{Mathematische annalen}, 100\penalty0 (1):\penalty0 295--320,
  1928.

\bibitem[Von~Neumann and Morgenstern(1944)]{von2007theory}
John Von~Neumann and Oskar Morgenstern.
\newblock \emph{Theory of games and economic behavior (commemorative edition)}.
\newblock Princeton university press, 1944.

\bibitem[Wei and Luo(2018)]{CL18}
C-Y. Wei and H.~Luo.
\newblock More adaptive algorithms for adversarial bandits.
\newblock In S\'ebastien Bubeck, Vianney Perchet, and Philippe Rigollet,
  editors, \emph{Proceedings of the 31st Conference On Learning Theory},
  volume~75 of \emph{Proceedings of Machine Learning Research}, pages
  1263--1291. PMLR, 06--09 Jul 2018.

\bibitem[Wilbur and Zhu(2009)]{wilbur2009click}
Kenneth~C Wilbur and Yi~Zhu.
\newblock Click fraud.
\newblock \emph{Marketing Science}, 28\penalty0 (2):\penalty0 293--308, 2009.

\end{thebibliography}

\end{document}